\newtheorem{definition}{Definition}[section]
\newtheorem{lemma}[definition]{Lemma}
\newtheorem{theorem}[definition]{Theorem}
\newtheorem{assumption}[definition]{Assumption}
\newtheorem*{proof}{Proof}
\journal{XXX}
\begin{document}
	\begin{frontmatter}
		
		
		
		\title{\textbf{ERD: Exponential Retinex decomposition based on weak space and hybrid nonconvex regularization and its denoising application}}
		
		
		\author{Liang Wu$^{a}$, Wenjing Lu$^a$, Liming Tang$^{a,*}$, Zhuang Fang$^a$}
		\cortext[cor1]{Corresponding author. \\
			Email addresses: tlmcs78@foxmail.com }
		
		\address[A]{School of Mathematics and Statistics, Hubei Minzu University, Enshi, Hubei, 445000, P.R. China}
		
\begin{abstract}

The Retinex theory models the image as a product of illumination and reflection components, which has received extensive attention and is widely used in image enhancement, segmentation and color restoration. However, it has been rarely used in additive noise removal due to the inclusion of both multiplication and addition operations in the Retinex noisy image modeling.
In this paper, we propose an exponential Retinex decomposition model based on hybrid non-convex regularization and weak space oscillation-modeling for image denoising. The proposed model utilizes non-convex first-order total variation (TV) and non-convex second-order TV to regularize the reflection component and the illumination component, respectively, and employs weak $H^{-1}$ norm to measure the residual component.
By utilizing different regularizers, the proposed model effectively decomposes the image into reflection, illumination, and noise components. An alternating direction multipliers method (ADMM) combined with the Majorize-Minimization (MM) algorithm is developed to solve the proposed model. Furthermore, we provide a detailed proof of the convergence property of the algorithm.
Numerical experiments validate both the proposed model and algorithm. Compared with several state-of-the-art denoising models, the proposed model exhibits superior performance in terms of peak signal-to-noise ratio (PSNR) and mean structural similarity (MSSIM).
					
\end{abstract}

\begin{keyword}
image denoising, Retinex decomposition, weak space, nonconvex, regularization
\end{keyword}

\end{frontmatter}

\section{Introduction}

Image decomposition aims to decompose a given image into several unrelated components with intrinsic features, and has been widely used in image denoising, image enhancement and pattern recognition.  Various decomposition techniques have been proposed, among which the variational image model based on regularization and functional minimization is an effective method for solving this ill-posed  inverse problems. This mathematical model can be expressed as follows \cite{b1,b2,b3}:
\[\mathop {\min }\limits_{u,v} \mathcal{R}_1\left( u \right) + \mathcal{R}_2\left( v \right),\;\;\;s.t.\;f = u + v,\tag{1.1}\]
where $f$ is the input image, ${\mathcal{R}_{\rm{1}}}\left(  \cdot  \right)$ and ${\mathcal{R}_{\rm{2}}}\left(  \cdot  \right)$ are the regularization terms, which are used to promote the prior information in image components $u$ and $v$, respectively.
By solving the problem (1.1), we can obtain an effective decomposition of the image into its desired components. This variational approach has shown promising results in various applications, such as image denoising, which decomposes the image into clear image and noise component \cite{b4,b5,b6}; image structure-texture decomposition, which decomposes the image into cartoon component and oscillation component \cite{b7,b8,b9}; Retinex decomposition decomposes the image into illumination component and reflection component \cite{b10,b11,b12}.

The Retinex theory, originally proposed by Land and McCann in 1971 \cite{b13}, has a valuable property \cite{b14,b15}:  the reflection component represents the inherent color information of the object itself, which may contain high contrast, while the illumination component represents the influence of external light sources in the environment, which is usually a smooth field. Based on this property, Kimmel et al. \cite{b16} utilized the norm to impose constraints on reflection and illumination in bounded variation (BV) space and $H^1$ space, respectively, and proposed an image decomposition model based on variational Retinex.
In order to more effectively separate images with rich illumination, Ng and Wang \cite{b17} proposed a new Retinex image decomposition model (also called TVH1), which simultaneously minimizes illumination and reflection by combining more prior constraints. And then, Liang and Zhang \cite{b18} modified some regularization constraints in TVH1 model and proposed a high-order total variation $L^1$ decomposition model that uses a second-order total variation regularizer to characterize the illumination for the first time, and this model can flexibly select the constraint domain of the components according to different application scenarios. Recently, Xu et al. designed an image structure and texture estimator in \cite{b19} and proposed a new structure and texture aware Retinex (STAR) model for image decomposition.

However, the aforementioned Retinex models may not be well-suited for noisy images, as the decomposition quality tends to decrease significantly with increasing noise levels. To address this issue and improve the decomposition quality of noisy images, Liu et al. \cite{b20} proposed an exponential Retinex total variation (ETV) model, which has demonstrated satisfactory results in both Retinex decomposition and denoising applications. Building upon the ETV model, Wang et al. \cite{b21} introduced a nonconvex exponential total variation (NETV) model that utilizes a nonconvex norm to measure both the illumination and reflection components.

Traditionally, the $L^2$ norm is commonly employed as the fidelity term to quantify oscillation information. However, Meyer pointed out  in \cite{b22} that the $L^2$ norm may not be the best choice for measuring oscillation information since oscillation functions tend to have large $L^2$ norms. Consequently, it becomes challenging to effectively separate the oscillation component from high-frequency components \cite{b23,b24,b25}. With this fact, Meyer proposed three weak function spaces, $G$, $E$, and $F$, to accurately measure oscillation functions. Subsequently, numerous researchers delved into these function spaces and developed optimization algorithms for computing norms defined within these spaces. For example,  Osher et al. \cite{b26} leveraged the dual space $H^{-1}$ of $H^1$ as an approximation for the $G$ space and presented an efficient approach to simplify the $H^{-1}$ norm. This advancement significantly improved algorithmic accuracy when working with these function spaces.

In this paper, we propose an exponential Retinex image decomposition model for image denoising, based on the Retinex decomposition of noisy images. This new model combines oscillatory component weaker-norm modeling with the generalized nonconvex hybrid regularization technique.
The proposed decomposition model separates the noise image into three components: oscillation, reflection, and illumination. The oscillation information is measured using the $H^{-1}$ weak norm, while the reflection component is measured using the generalized non-convex first-order TV regularizer, which captures piecewise constant features. The illumination component, on the other hand, is measured using the generalized non-convex second-order TV regularizer, which captures spatial smoothness.
By incorporating these regularizers, our model effectively separates the oscillating noise component from the denoised image. The denoised image is then reconstructed using the reflection and illumination components. To solve this proposed model, we employ the ADMM   algorithm \cite{b27,b28}. Specifically, we utilize the Majorization-Minimization (MM) algorithm \cite{b29,b30} to solve two generalized nonconvex subproblems.
Furthermore, under reasonable assumptions, we rigorously prove the convergence of our proposed algorithm. In summary, the main contributions of this paper are as follows:
\begin{itemize}
\item[$\bullet$] An exponential Retinex image decomposition model based on weak space and hybrid  nonconvex regularization is proposed, which  effectively decomposes the noisy image into three components: oscillation, reflection, and illumination.

\item[$\bullet$] An ADMM  combined with the MM algorithm is introduced to solve the proposed model, and the sufficient conditions for the convergence of the proposed algorithm are provided.

\item[$\bullet$] We apply the proposed decomposition model to image denoising, where the restored image is reconstructed by  reflection and illumination components. The effectiveness and superiority of the proposed model and algorithm are validated through numerical experimental results.
\end{itemize}

The rest of this paper is organized as follows. In Section 2, we introduce the Retinex theory and show the  MM algorithm for solving non-convex minimization problems in detail. In Section 3, we propose a image exponential Retinex decomposition model based on weak space and hybrid nonconvex regularization, and introduce a numerical algorithm for the proposed model. In addition, the convergence conditions of the proposed algorithm are provided in this section. The
numerical experimental results showing the performance of the proposed
model are given in Section 4. Finally, we conclude this paper in Section 5.

\section{Background knowledge}
	
In this section, we provide a concise overview of the Retinex theory and the MM algorithm, which are relevant to our current research.

\subsection{Retinex theory}
	
Retinex theory was proposed by Land and McCann \cite{b13} in 1971, which is based on color perception modeling of human visual system. The Retinex theory holds that the color of the object itself is independent of the illumination distribution and intensity on the surface of the object, but is related to the reflection ability of different wavelengths of light. Based on this, the imaging process of an observed image can be represented by the following mathematical model,

\[O = I \odot R,
\tag{2.1}
\]
where $\odot$ denotes entrywise multiplication, $O$, $I$ and $R$ represent the perceived image, illumination and reflection, respectively. In general, we assume that $R \in \left[ {0,1} \right]$ is a piecewise constant region and $I \in \left[ {0, + \infty } \right]$ is a spatial smoothing function.

In order to effectively separate $I$ and $R$ from the above ill-posed problem, a commonly used approach is to eliminate the coupling relationship between $I$ and $R$ in problem (2.1) by logarithmic transformation.

\[o = i + r,
\tag{2.2}
\]
where $o = \log \left( O \right)$, $i = \log \left( I \right)$ and $r = \log \left( R \right)$.

Most Retinex variational models are based on Eq.(2.2). The logarithmic operation converts the multiplicative model into an ordinary additive model, effectively reducing the complexity of the algorithm and saving time costs. Additionally, the additive model facilitates discussions on algorithm convergence under certain conditions.

\subsection{MM algorithm for solving a nonconvex problem}

When facing a challenging optimization problem that is difficult to solve, it is often necessary to find an appropriate alternative function. Subsequently, the MM algorithm \cite{b29,b30} is utilized to convert the original optimization problem into an alternative function optimization problem.

To provide a clearer illustration of the MM algorithmic process, let's consider a general nonconvex optimization problem \cite{b31,b32} in image processing  as
\[\mathop {\min }\limits_u \left\{ {R\left( u \right): = \frac{\lambda }{2}||f - u||_2^2 + \int_\Omega  {\varphi \left( {u} \right)dx} } \right\}, \tag{2.3}\]
where $\Omega  \subseteq {R^2}$ is an image domain, $\lambda$ is a data fidelity term coefficient, and $\varphi \left(  \cdot  \right)$ is a concave potential function which usually has three choices as follows, (a) ${\varphi _1}\left( {t} \right) = |t{|^p}\left( {0 < p < 1} \right)$, (b) ${\varphi _2}\left( {t} \right) = \ln \left( {1 + \alpha |t|} \right)\left( {\alpha  > 0} \right)$, and (c) ${\varphi _3}\left( {t} \right) = \beta |t|/\left( {1 + \beta |t|} \right)\left( {\beta  > 0} \right)$.

In order to find an alternative function that meets the constraints of the MM algorithm, the following inequality is obtained by the properties of the concave function, \[\int_\Omega  {\varphi \left( u \right)dx}  \le \int_\Omega  {\varphi \left( v \right)dx}  + \int_\Omega  {\nabla \varphi \left( v \right)\left( {u - v} \right)dx} .\]
Since the non-differentiability of the indicator function $\varphi \left(  \cdot  \right)$ at zero, we define the following formula to replace the differential in the numerical algorithm, i.e.,
\[\nabla \varphi \left( v \right) = \left\{ \begin{array}{l}
\varphi '\left( {{v_\varepsilon }} \right),\;\;\;\;\;\;v \ge 0,\\
- \varphi '\left( {{v_\varepsilon }} \right),\;\;\;\;v < 0.
\end{array} \right.\tag{2.4}\]
where ${v_\varepsilon } = \left| v \right| + \varepsilon $, and $\varepsilon $ is a very small positive value, which is set at $\varepsilon {\rm{ = 1}} \times {\rm{1}}{{\rm{0}}^{ - 5}}$ in this paper. And then, we mark
\[T\left( {v,u} \right) = \frac{\lambda }{2}||f - u||_2^2 + \int_\Omega  {\varphi \left( {v} \right)dx}  + \int_\Omega  {\nabla \varphi \left( {v} \right)\left( {u - v} \right)dx} \tag{2.5}\]
and can easily prove that the following two conditional relations hold, i.e., (i)$T\left( {v,u} \right) \ge R\left( u \right)$, (ii)$T\left( {v,v} \right) = R\left( v \right)$, $\forall v \in \Omega $. Obviously, $T\left( {v,u} \right)$ is the substitution function that we have found. According to the MM algorithm, we let $v = {u^k}$, and the original minimization problem can be approximately replaced by
\[{u^{k + 1}} = \mathop {\arg \min }\limits_u T\left( {{u^k},u} \right) = \frac{\lambda }{2}||f - u||_2^2 + \int_\Omega  {\nabla \varphi \left( {{u^k}} \right)udx}  + C,\tag{2.6}\]
where $C$ is a constant that does not depend on $u$, which can be omitted in the minimization problem. The minimization problem (2.6) is a classical ${l^2}$
smooth problem, and we can readily obtain its analytic solution by the necessary condition of functional extremum,
\[{u^{k{\rm{ + 1}}}}{\rm{ = }}f - \frac{{\nabla \varphi \left( {{u^k}} \right)}}{\lambda }.\tag{2.7}\]
Similarly, the differential operator $\nabla \varphi \left( {{u^k}} \right)$ is defined as
\[\nabla \varphi \left( {{u^k}} \right) = \left\{ \begin{array}{l}
\varphi '\left( {u_\varepsilon ^k} \right),\;\;\;\;\;\;{u^k} \ge 0,\\
- \varphi '\left( {u_\varepsilon ^k} \right),\;\;\;\;{u^k} < 0.
\end{array} \right.\tag{2.8}\]

So far, we use the MM algorithm to obtain the numerical solution of the original minimization problem (2.3), i.e., Eq.(2.7). In addition, the sequence $\{ {u^k}\} $ generated by Eq.(2.7) clearly converges to the solution of the convex minimization problem (2.6). Since the original problem (2.3) is a nonconvex minimization problem, which may have multiple unequal solutions. Therefore, we will discuss the sequence $\{ {u^k}\} $ converges to a extremal solution of the original minimization problem (2.3) in the following.

\begin{lemma}
In \cite{b33}, the author has summarized the convergence of MM algorithm for solving various optimization problems.
Among them, in the case that the substitution function is constructed by the properties of the concave function, the solution sequence generated by the MM algorithm converges to a stationary point (also called stable point) of the original nonconvex minimization problem. Specifically, the sequence ${{u^k}}$ generated by Eq(2.7) converges to a stationary point of the minimization problem (2.3) as $k \to  + \infty $.
\end{lemma}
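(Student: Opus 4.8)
The plan is to verify that the iteration (2.7) meets the standard hypotheses of the MM convergence theory cited as [b33], and then invoke that result. First I would check the two defining properties of a majorizer, which are already asserted after (2.5): $T(v,u)\ge R(u)$ for all $u$, and $T(v,v)=R(v)$. The first follows from the tangent-line inequality for the concave function $\varphi$, namely $\varphi(u)\le\varphi(v)+\nabla\varphi(v)(u-v)$ integrated over $\Omega$, added to the common data term $\frac{\lambda}{2}\|f-u\|_2^2$; the second is immediate since the linearization is exact at $u=v$. Consequently the usual monotone descent estimate holds: $R(u^{k+1})\le T(u^k,u^{k+1})\le T(u^k,u^k)=R(u^k)$, so $\{R(u^k)\}$ is nonincreasing, and since $R$ is bounded below (the data term is nonnegative and $\varphi$ is bounded below on the relevant range), $\{R(u^k)\}$ converges.

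Next I would establish that the iterates themselves are well behaved. Because $T(u^k,\cdot)$ is strongly convex in $u$ with modulus $\lambda$ (it differs from $\frac{\lambda}{2}\|f-u\|_2^2$ only by an affine term), the minimizer $u^{k+1}$ in (2.6) is unique and is exactly (2.7). Strong convexity also yields the sharper bound $R(u^k)-R(u^{k+1})\ge T(u^k,u^k)-T(u^k,u^{k+1})\ge\frac{\lambda}{2}\|u^{k+1}-u^k\|_2^2$, so that $\sum_k\|u^{k+1}-u^k\|_2^2<\infty$ and in particular $\|u^{k+1}-u^k\|_2\to0$. Boundedness of $\{u^k\}$ follows from coercivity of $R$ (again from the $\frac{\lambda}{2}\|f-u\|_2^2$ term) together with the descent property, so a convergent subsequence exists.

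Finally I would pass to the limit in the optimality condition. The first-order condition for (2.6), written from (2.7), reads $\lambda(u^{k+1}-f)+\nabla\varphi(u^k)=0$. The map $v\mapsto\nabla\varphi(v)$ defined through (2.4) (equivalently (2.8)) is, up to the $\varepsilon$-regularization, the derivative of the smoothed potential and is continuous in $v$; hence along a subsequence $u^{k_j}\to u^\ast$ we may take limits, using $\|u^{k+1}-u^k\|_2\to0$ to identify the limits of $u^{k_j+1}$ and $u^{k_j}$, to obtain $\lambda(u^\ast-f)+\nabla\varphi(u^\ast)=0$, which is precisely the stationarity (Euler--Lagrange) condition for the nonconvex problem (2.3). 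This is exactly the situation covered by the analysis in [b33] for majorizers built from concavity, so the conclusion of the lemma follows.

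I expect the main obstacle to be the last step: justifying the limit passage rigorously when $\varphi$ is nonsmooth at the origin. The subgradient $\nabla\varphi$ is only defined via the $\varepsilon$-regularization in (2.4), so strictly speaking one proves convergence to a stationary point of the $\varepsilon$-smoothed functional, and one should either state the result for that regularized problem or argue that the discrepancy is controlled by $\varepsilon$. Making this precise — and handling the fact that the cited theorem in [b33] is stated in a finite-dimensional or abstract setting whereas here we work with an integral functional over $\Omega$ — is the part that requires care; everything else is the routine MM descent argument sketched above.
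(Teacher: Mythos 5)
The paper does not actually prove this lemma: it is stated as a direct appeal to \cite{b33}, with the majorization properties (i)--(ii) of $T(v,u)$ checked informally in Section~2.2 and the convergence to a stationary point simply quoted from that reference. Your proposal therefore necessarily takes a different route, namely reconstructing the standard MM descent argument from scratch, and the reconstruction is essentially sound: the tangent-line inequality for the concave $\varphi$ gives the majorizer properties, the $\lambda$-strong convexity of $T(u^k,\cdot)$ gives the sufficient-decrease estimate $R(u^k)-R(u^{k+1})\ge\frac{\lambda}{2}\|u^{k+1}-u^k\|_2^2$, coercivity of the data term gives boundedness, and the limit passage in $\lambda(u^{k+1}-f)+\nabla\varphi(u^k)=0$ yields stationarity. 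What your version buys over the paper's is an explicit, self-contained argument; what it costs is that you must confront the one point the citation sweeps under the rug, and you have correctly located it: the map $v\mapsto\nabla\varphi(v)$ defined by (2.4)/(2.8) is \emph{discontinuous at $v=0$} (it jumps from $\varphi'(\varepsilon)$ to $-\varphi'(\varepsilon)$ across the origin), so the limit passage only gives stationarity of the $\varepsilon$-smoothed functional away from the zero set of $u^*$, or stationarity of (2.3) in a generalized (limiting-subdifferential) sense. If you want a fully rigorous statement you should either phrase the conclusion for the regularized problem, or replace the pointwise derivative by the Clarke subdifferential of $\varphi(|\cdot|)$ and use its outer semicontinuity in the limit step; with that caveat made explicit your argument is complete and, unlike the paper's, does not depend on the reader checking that the hypotheses of the theorem in \cite{b33} are actually met by an integral functional with a nonsmooth integrand.
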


\section{The proposed model}

In this section, we first present an image exponential Retinex decomposition model based on weak space and hybrid nonconvex regularization. And then, this new model is numerically solved in the framework of ADMM combined with MM algorithm. Finally, we discuss the convergence conditions of proposed numerical algorithm.
	
\subsection{The proposed model}	

An observed image can be considered as a combination of multiple disjoint components. We can accurately model these different components by utilizing the Retinex theory, which decomposes an image into illumination and reflection. Building upon this concept, we propose the following image degradation model:
\[f = I \odot R + n=u+n,\tag{3.1}\]
where $f$ is a degraded image with noise, $u = I \odot R$ and $n$ represent the original clear image and gaussian additive noise, respectively.

In order to maintain the probability density of the Gaussian noise $n$, we apply a logarithmic transformation only to the original image $u$ as shown in Eq.(2.2). Additionally, we perform an exponential transformation on the components $i$ and $r$ in the logarithmic domain. By doing so, the model (3.1) can be expressed as follows:
\[f = {e^{i + r}} + n.\tag{3.2}\]

With the image modeling (3.2), we propose an exponential Retinex image decomposition model based on weak space and hybrid nonconvex regularization, which is defined as,

\[\mathop {\min }\limits_{i,r} \left\{ {E\left( {i,r} \right) = \frac{\lambda }{2}\left\| {f - {e^{i + r}}} \right\|_{{H^{ - 1}}}^2 + {\omega _1}\int_\Omega  {\varphi \left( {{\nabla ^2}i} \right)} dx + {\omega _2}\int_\Omega  {\varphi \left( {\nabla r} \right)} dx + \frac{\theta }{2}\left\| i \right\|_2^2} \right\},\tag{3.3}\]
where $\lambda$, ${{\omega _1}}$, ${{\omega _2}}$ and $\theta$ are  positive regularization parameters, and $\varphi \left(  \cdot  \right)$ is a nonconvex potential function. The term ${\left\| {f - {e^{i + r}}} \right\|_{{H^{ - 1}}}^2}$ is modeled based on the dual space ${H^{ - 1}}\left( \Omega  \right)$ of the Hilbert space ${H^1}\left( \Omega  \right)$, which aims to depict the oscillatory component. The two nonconvex regularization terms ${\int_\Omega  {\varphi \left( {{\nabla ^2}i} \right)} dx}$ and ${\int_\Omega  {\varphi \left( {\nabla r} \right)} dx}$ characterize the prior features of illumination $i$ and reflection $r$, respectively. The last term is to ensure the stability of this model, where the parameter $\theta $ is a very small value.

The model (3.3) for image exponential Retinex decomposition aims to characterize different components individually. By utilizing a weaker $H^{-1}$-norm, it effectively extracts the oscillation component from the degraded image $f$. The remaining components are further decomposed into a spatially smooth illumination $i$ and a piecewise constant reflection $r$, which are measured by two generalized nonconvex regularizers. This decomposition model allows for better extraction of oscillation information from an image, resulting in improved denoising effects. Figure 1 illustrates the block diagram of the proposed technique.

\begin{center}
	\begin{minipage}
		{1\textwidth}\centering {\includegraphics[width=10cm,height=6cm]{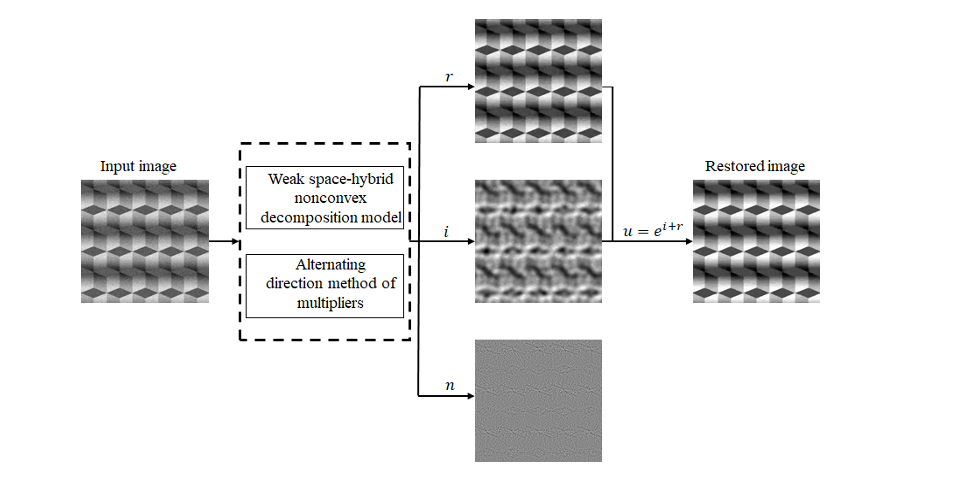}}
	\end{minipage}
	\\[1mm]
	\vskip2mm
	{\small \textbf{Fig.1} The block diagram of the proposed technique}
\end{center}

\begin{theorem}
If $u \in {H^{ - 1}}\left( \Omega  \right)$, then $F\left( u \right) = \left\| u \right\|_{{H^{ - 1}}}^2$ is a convex functional.
\end{theorem}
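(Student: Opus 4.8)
The plan is to use the fact that the space $H^{-1}(\Omega)$, being the dual of the Hilbert space $H^1(\Omega)$ (and, in the concrete form used in this paper following Osher et al., the space on which $\left\| u \right\|_{H^{-1}}^2 = \int_\Omega \left| \nabla(\Delta^{-1}u) \right|^2 dx = \langle u, (-\Delta)^{-1}u\rangle_{L^2}$), carries a genuine inner product $\langle\cdot,\cdot\rangle_{H^{-1}}$ for which $\left\| u \right\|_{H^{-1}}^2 = \langle u,u\rangle_{H^{-1}}$. Then $F$ is nothing but the quadratic form attached to the self-adjoint, positive semidefinite operator $(-\Delta)^{-1}$, and convexity of such a quadratic form is classical. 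So the whole argument reduces to a one-line expansion of an inner product.

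Concretely, I would first fix the convention for the $H^{-1}$ norm that is actually used in (3.3) — homogeneous Neumann boundary conditions and restriction to the zero-mean subspace, so that $-\Delta$ is invertible and $\langle u,v\rangle_{H^{-1}}:=\int_\Omega \nabla(\Delta^{-1}u)\cdot\nabla(\Delta^{-1}v)\,dx$ is well defined, bilinear, symmetric, and positive semidefinite. Then, for any $u,v\in H^{-1}(\Omega)$ and $\alpha\in[0,1]$, I would expand
\[
F\big(\alpha u+(1-\alpha)v\big)=\alpha^2\left\| u \right\|_{H^{-1}}^2+2\alpha(1-\alpha)\langle u,v\rangle_{H^{-1}}+(1-\alpha)^2\left\| v \right\|_{H^{-1}}^2,
\]
subtract it from $\alpha F(u)+(1-\alpha)F(v)$, and observe that the cross terms collapse exactly to
\[
\alpha F(u)+(1-\alpha)F(v)-F\big(\alpha u+(1-\alpha)v\big)=\alpha(1-\alpha)\,\left\| u-v \right\|_{H^{-1}}^2\ \geq\ 0,
\]
which is precisely the convexity inequality (and in fact shows $F$ is strongly convex modulo the kernel of the $H^{-1}$ norm). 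A shorter, computation-free alternative is to note that $u\mapsto\left\| u \right\|_{H^{-1}}$ is convex simply because it is a norm, it is nonnegative, and $t\mapsto t^2$ is convex and nondecreasing on $[0,+\infty)$, so the composition $F(u)=\big(\left\| u \right\|_{H^{-1}}\big)^2$ is convex; I would present the inner-product computation as the main proof (it gives the quantitative bound) and mention the composition argument as a remark.

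I do not expect a real mathematical obstacle here: the only point that requires care is bookkeeping, namely making explicit that the $H^{-1}$ norm appearing in (3.3) is the one induced by $(-\Delta)^{-1}$ on mean-zero data with the chosen boundary conditions, so that it is genuinely an inner-product norm. Once that convention is pinned down, the convexity of $F$ is immediate from the displayed identity.
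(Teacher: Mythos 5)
Your proposal is correct and follows essentially the same route as the paper: both reduce the claim to the Osher-type identity $\left\| u \right\|_{H^{-1}}^2=\left\| \nabla(\Delta^{-1}u) \right\|_2^2$ and then observe that the convexity defect of this quadratic form is a nonnegative square, the only (immaterial) difference being that the paper verifies the first-order condition $F(u_2)\ge F(u_1)+\nabla F(u_1)(u_2-u_1)$ while you verify the Jensen inequality directly. Your added care about the boundary conditions and mean-zero normalization needed for $\Delta^{-1}$ to be well defined, and the remark that $F$ is a nondecreasing convex function of the norm, are both sound refinements of what the paper does.
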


\begin{proof}
We need to first simplify the ${H^{ - 1}}$ norm in order to prove the convexity of functional $F\left( u \right)$. Osher et al. have provided a popular method to simplify the norm $\left\| u \right\|_{{H^{ - 1}}}^2$ in \cite{b26}. We assume $u = div\left( \textbf{g} \right)$ with $g \in {L^2}{\left( \Omega  \right)^2}$ and can use a unique Hodge decomposition of $\textbf{g}$ as
\[\textbf{g} = \nabla P + \textbf{Q},\]
where $P \in {H^1}\left( \Omega  \right)$ is a scalar function and $\textbf{Q}$ is a divergence-free vector field. And then, we can obtain that $u = div\left( \textbf{g} \right) = div\left( {\nabla P} \right) = \Delta P$, i.e., $P = {\Delta ^{ - 1}}u$. Finally, we ignore $\textbf{Q}$ from the expression $\textbf{g}$ and can obtain the following relationship as
\[F\left( u \right) = \left\| u \right\|_{{H^{ - 1}}}^2 = \left\| {\nabla \left( {{\Delta ^{ - 1}}u} \right)} \right\|_2^2.\tag{3.4}\]

We define $A: = \nabla {\Delta ^{ - 1}}$ to simplify the proof process. Therefore, we only need to prove that the functional $F\left( u \right) = \left\| {Au} \right\|_2^2$ satisfies the first-order condition of the convex functional, i.e.,
\[F\left( {{u_2}} \right) \ge F\left( {{u_1}} \right) + \nabla F\left( {{u_1}} \right)\left( {{u_2} - {u_1}} \right),\tag{3.5}\]
where ${u_1}, {u_2} \in \Omega $ are any two test functions. According to the fact that $\nabla F\left( {{u_1}} \right) = \nabla \left( {\left\| {A{u_1}} \right\|_2^2} \right) = 2{A^*}A{u_1}$ and $F\left( {{u_i}} \right) = \left\| {A{u_i}} \right\|_2^2 = \left\langle {A{u_i},\;A{u_i}} \right\rangle \left( {i = 1,2} \right)$, we can rewriter the formula (3.5) as
\[\begin{array}{l}
\left\langle {A{u_2},\;A{u_2}} \right\rangle  - \left\langle {A{u_1},\;A{u_1}} \right\rangle  - \left\langle {2{A^*}A{u_1},\;{u_2} - {u_1}} \right\rangle\\
= \left\langle {A{u_2},\;A{u_2}} \right\rangle  - \left\langle {A{u_1},\;A{u_1}} \right\rangle  - \left\langle {2A{u_1},\;A\left( {{u_2} - {u_1}} \right)} \right\rangle \\
= \left\langle {A{u_2},\;A{u_2}} \right\rangle  - 2\left\langle {A{u_1},\;A{u_2}} \right\rangle  + \left\langle {A{u_1},\;A{u_1}} \right\rangle \\
= \left\| {A{u_1},A{u_2}} \right\|_2^2.
\end{array}\tag{3.6}\]

The inequality $\left\| {A{u_1},A{u_2}} \right\|_2^2 \ge 0$ is clearly established, which implies the functional $F\left( u \right)$ satisfies condition (3.5) for any ${u_1}, {u_2} \in \Omega $. In summary, we prove that the functional $F\left( u \right)$ is strictly convex for $u$.
\end{proof}

\noindent\textbf{Remark.} In fact, Tang et al. \cite{b34} used a concrete example to verify that the ${H^{ - 1}}\left( \Omega  \right)$ space can accurately model oscillation information. In this example, a one-dimensional oscillation function $\sin 2\pi nx \; (x \in \left[ {0,1} \right]) $ is used to validate this point. This example indicates that $\left\| {\sin 2\pi nx} \right\|_{{H^{ - 1}}}^2 \to 0$ as $n \to \infty $. Through this example, we can know that a periodic function has a smaller norm in ${H^{ - 1}}\left( \Omega  \right)$ space.  Therefore, it is appropriate to employ   ${H^{ - 1}}\left( \Omega  \right)$ space to measure the oscillation components in the minimization model since their ${H^{ - 1}}$-norm is very small.

\subsection{Numerical algorithm}
	
In this subsection, a numerical algorithm combined with ADMM is proposed to solve the non-convex optimization problem. Next, we present the specific solution process of porposed model (3.3) in the following.

We introduce and constrain the auxiliary variables $v = i + r$, $u = {e^v}$, $\textbf{m} = {\nabla ^2}i$ and $\textbf{n} = \nabla r$ in order to simplify the complex coupling relationship between variables $i$ and $r$ in the optimization problem (3.3). At this point, the augmented Lagrangian function $L$ of the original minimization problem can be rewritten as
\[\begin{array}{*{20}{l}}
{L\left( {i,r,v,u,{\bf{m}},{\bf{n}};{y_1},{{\bf{y}}_{\bf{2}}},{{\bf{y}}_{\bf{3}}}} \right) = \frac{\lambda }{2}\left\| {f - u} \right\|_{{H^{ - 1}}}^2 + {\omega _1}\int_\Omega  {\varphi \left( {\bf{m}} \right)} dx}\\
{\;\; + {\omega _2}\int_\Omega  {\varphi \left( {\bf{n}} \right)} dx + \frac{\theta }{2}\left\| i \right\|_2^2 + \frac{\beta }{2}\left\| {v - i - r} \right\|_2^2 + \frac{\rho }{2}\left\| {u - {e^v} + \frac{{{y_1}}}{\rho }} \right\|_2^2}\\
{\;\; + \frac{\rho }{2}\left\| {{\bf{m}} - {\nabla ^2}i + \frac{{{{\bf{y}}_{\bf{2}}}}}{\rho }} \right\|_2^2 + \frac{\rho }{2}\left\| {{\bf{n}} - \nabla r + \frac{{{{\bf{y}}_{\bf{3}}}}}{\rho }} \right\|_2^2 - C},
\end{array}\tag{3.7}\]
where ${{y_1}}$, ${{{\bf{y}}_{\bf{2}}}}$ and ${{{\bf{y}}_{\bf{3}}}}$ are the Lagrangian multipliers, and $\beta$, $\rho  > 0$ are the parameters of penalty terms. It should be pointed out that $C{\rm{ = }}\frac{1}{{2\rho }}\left( {\left\| {{y_1}} \right\|_2^2 + \left\| {{{\bf{y}}_{\bf{2}}}} \right\|_2^2 + \left\| {{{\bf{y}}_3}} \right\|_2^2} \right)$ can be omitted in the minimization process. Then, we set the initial value of the variable and use the alternating direction iterative algorithm to optimize the augmented Lagrangian function $L$. In this way, we convert the solution of the original minimization problem into the optimization of several subproblems.

In fact, the difficulty of solving the subproblems is much lower compared to directly solving the original minimization model (3.3). Moreover, due to the non-convex or nonlinear nature of these  subproblems, we can only resort to the MM algorithm to find numerical solutions instead of analytical solutions. In the following, we provide a detailed description of the solving process for all subproblems.

\textbf{(1) $v$-subproblem}

Firstly, the $v$-subproblem can be described as
\[{v^{k + 1}} = \mathop {\arg \min }\limits_v \frac{\beta }{2}\left\| {v - {i^k} - {r^k}} \right\|_2^2 + \frac{\rho }{2}\left\| {{u^k} - {e^v} + \frac{{y_1^k}}{\rho }} \right\|_2^2.\tag{3.8}\]

It should be noted that ${{e^v}}$ is a nonlinear functional, and it is difficult to obtain the analytic solution of the variable $v$ directly by using the necessary condition of functional extremum. We perform a first-order linear Taylor expansion of the nonlinear term $\frac{\rho }{2}||{u^k} - {e^{{v^k}}} + \frac{{y_1^k}}{\rho }||_2^2$ at point ${v^k}$ as \cite{b35}
\[\frac{\rho }{2}\left\| {{u^k} - {e^v} + \frac{{y_1^k}}{\rho }} \right\|_2^2 = \frac{\rho }{2}\left\| {{e^{{v^k}}} - {u^k} - \frac{{y_1^k}}{\rho }} \right\|_2^2 + \rho {e^{{v^k}}}\left( {{e^{{v^k}}} - {u^k} - \frac{{y_1^k}}{\rho }} \right)\left( {v - {v^k}} \right) + R\left( v \right),\]
where $R\left( v \right)$ is the remainder of a Taylor expansion that can be approximated by a quadratic term. Therefore, we rewrite the minimization problem (3.8) to
\[\begin{array}{l}
{v^{k + 1}} = \mathop {\arg \min }\limits_v \frac{\beta }{2}\left\| {v - {i^k} - {r^k}} \right\|_2^2 + \frac{\rho }{2}\left\| {{e^{{v^k}}} - {u^k} - \frac{{y_1^k}}{\rho }} \right\|_2^2\\
\;\;\;\;\;\;\;\;\;\; + \rho {e^{{v^k}}}\left( {{e^{{v^k}}} - {u^k} - \frac{{y_1^k}}{\rho }} \right)\left( {v - {v^k}} \right) + \frac{\tau }{2}\left\| {v - {v^k}} \right\|_2^2,
\end{array}\tag{3.9}\]
where $\left| \tau  \right|$ is a slight number, and the last term is to approximate the Taylor remainder. According to the Euler-Lagrange equation, we can easily find the necessary conditions for the solution of the above optimization problem (3.9) as
\[\beta \left( {v - {i^k} - {r^k}} \right) + \rho {e^{{v^k}}}\left( {{e^{{v^k}}} - {u^k} - \frac{{y_1^k}}{\rho }} \right) + \tau \left( {v - {v^k}} \right) = 0.\]
Furthermore, we can obtain the analytical solution of the $v$-subproblem as
\[{v^{k + 1}} = \frac{{\beta \left( {{i^k} + {r^k}} \right) + \tau {v^k} - \rho {e^{{v^k}}}\left( {{e^{{v^k}}} - {u^k}} \right) + y_1^k{e^{{v^k}}}}}{{\beta  + \tau }}.\tag{3.10}\]

\textbf{(2) $u$-subproblem}

According to Theorem 3.1, we transform the ${H^{ - 1}}$-norm into the ordinary ${L^2}$-norm. The $u$-subproblem is a convex smooth minimization problem, which is described as
\[{u^{k + 1}} = \mathop {\arg \min }\limits_u \frac{\lambda }{2}\left\| {\nabla \left( {{\Delta ^{ - 1}}\left( {u - f} \right)} \right)} \right\|_2^2 + \frac{\rho }{2}\left\| {u - {e^{{v^{k + 1}}}} + \frac{{y_1^k}}{\rho }} \right\|_2^2,\tag{3.11}\]
where $\nabla $ and ${{\Delta ^{ - 1}}}$ are the gradient operator and the Laplace inverse operator, respectively. The solution of this minimization problem satisfies Euler-Lagrange equation, i.e.,
\[\lambda {\left( {\nabla {\Delta ^{ - 1}}} \right)^*}\left( {\nabla {\Delta ^{ - 1}}\left( {u - f} \right)} \right) + \rho \left( {u - {e^{{v^{k + 1}}}} + \frac{{y_1^k}}{\rho }} \right) = 0,\tag{3.12}\]
where operator ${\nabla ^*}$ is the conjugate operator of operator $\nabla $, and they also satisfy the relation ${\nabla ^*}\nabla  =  - div\left( \nabla  \right) =  - \Delta $. Both sides of Eq.(3.12) can be rewritten by the $\Delta $ operator as
\[\left( {\lambda I - \rho \Delta } \right)u = \lambda f + \rho \Delta \left( { - {e^{{v^{k + 1}}}} + \frac{{y_1^k}}{\rho }} \right),\tag{3.13}\]
where $I$ is an identity matrix of the same size as $u$. It should be noted that the $\Delta $ operator in Eq.(3.13) is a convolution operation, so we cannot directly separate the variable $u$ from it. A general approach is to use Fourier transform to transform the convolution operation in the spatial domain into the ordinary multiplication in the frequency domain, thus obtaining an analytical solution of the $u$-subproblem as
\[u^{k+1} = {F^{ - 1}}\left\{ {\frac{{F\left[ {\lambda f + \rho \Delta \left( { - {e^{{v^{k + 1}}}} + \frac{{y_1^k}}{\rho }} \right)} \right]}}{{F\left( {\lambda I - \rho \Delta } \right)}}} \right\},\tag{3.14}\]
here and later, $F\left(  \cdot  \right)$ and ${F^{ - 1}}\left(  \cdot  \right)$ are defined as Fourier transform and inverse Fourier transform operations, respectively.

\textbf{(3) $i$ and $r$-subproblem}

Firstly, the $i$-subproblem is described as
\[{i^{k + 1}} = \mathop {\arg \min }\limits_i \frac{\beta }{2}\left\| {{v^{k + 1}} - i - {r^k}} \right\|_2^2 + \frac{\rho }{2}\left\| {{{\bf{m}}^k} - {\nabla ^2}i + \frac{{{\bf{y}}_2^k}}{\rho }} \right\|_2^2 + \frac{\theta }{2}\left\| i \right\|_2^2.\tag{3.15}\]

Obviously, this minimization problem is easy to be solved because it is convex and smooth. By using the variational method, we can directly obtain the necessary conditions for the solution of the minimization problem (3.15) as
\[\left( {\beta I + \theta I + \rho {{\left( {{\nabla ^2}} \right)}^*}{\nabla ^2}} \right)i = \beta \left( {{v^{k + 1}} - {r^k}} \right) + \rho {\left( {{\nabla ^2}} \right)^*}\left( {{{\bf{m}}^k} + \frac{{{\bf{y}}_2^k}}{\rho }} \right),\tag{3.16}\]
where ${\left( {{\nabla ^2}} \right)^*}$ is a conjugate operator of ${{\nabla ^2}}$. Similarly, Eq.(3.16) also contains two-dimensional convolution operation. Assuming that $i$ satisfies the periodic boundary condition, it is not difficult to obtain ${i^{k + 1}}$ from the above linear equation by Fourier transform and its inverse transform as
\[{i^{k + 1}} = {F^{{\rm{ - 1}}}}\left\{ {\frac{{F\left[ {\beta \left( {{v^{k + 1}} - {r^k}} \right) + \rho {{\left( {{\nabla ^2}} \right)}^*}\left( {{{\bf{m}}^k} + \frac{{{\bf{y}}_2^k}}{\rho }} \right)} \right]}}{{F\left( {\beta I + \theta I + \rho {{\left( {{\nabla ^2}} \right)}^*}{\nabla ^2}} \right)}}} \right\}.\tag{3.17}\]

Next, we use similar steps to solve the $r$-subproblem
\[{r^{k + 1}} = \mathop {\arg \min }\limits_r \frac{\beta }{2}\left\| {{v^{k + 1}} - {i^{k + 1}} - r} \right\|_2^2 + \frac{\rho }{2}\left\| {{{\bf{n}}^k} - \nabla r + \frac{{{\bf{y}}_3^k}}{\rho }} \right\|_2^2.\tag{3.18}\]

Similarly, according to the Euler-Lagrangian equation, the necessary condition for the solution of the minimization problem (3.18) is
\[\left( {\beta I + \rho {\nabla ^*}\nabla } \right)r = \beta \left( {{v^{k + 1}} - {i^{k + 1}}} \right) + \rho {\nabla ^*}\left( {{{\bf{n}}^k} + \frac{{{\bf{y}}_3^k}}{\rho }} \right).\tag{3.19}\]
Further, we solve the iterative form of ${r^{k + 1}}$ by the Fourier domain, which is described as
\[{r^{k + 1}} = {F^{ - 1}}\left\{ {\frac{{F\left[ {\beta \left( {{v^{k + 1}} - {i^{k + 1}}} \right) + \rho {\nabla ^*}\left( {{{\bf{n}}^k} + \frac{{{\bf{y}}_3^k}}{\rho }} \right)} \right]}}{{F\left( {\beta I + \rho {\nabla ^*}\nabla } \right)}}} \right\}.\tag{3.20}\]

\textbf{(4) $\mathbf{m}$ and $\mathbf{n}$-subproblem}

Now, we solve ${\bf{m}}$ and ${\bf{n}}$ nonconvex subproblems. In fact, we only need to solve one of the minimization problems, because the forms of these two non-convex optimization problems are similar. Here we solve the ${\bf{m}}$-subproblem in detail, which is written as
\[{{\bf{m}}^{k + 1}} = \mathop {\arg \min }\limits_{\bf{m}} {\omega _1}\int_\Omega  {\varphi \left( {\bf{m}} \right)} dx + \frac{\rho }{2}\left\| {{\bf{m}} - {\nabla ^2}{i^{k + 1}} + \frac{{{\bf{y}}_2^k}}{\rho }} \right\|_2^2.\tag{3.21}\]
There is not difficult to perceive that the minimization problem (3.21) is a classical nonconvex optimization problem. In Section 2.2, we introduce in detail the solution of such problems by MM algorithm, whcih is updated in the form of Eq.(2.7). Therefore, we directly obtain the solution of problem (3.21) as
\[{{\bf{m}}^{k + 1}} = {\nabla ^2}{i^{k + 1}} - \frac{{{\bf{y}}_2^k}}{\rho } - \frac{{{\omega _1}\nabla \varphi \left( {{{\bf{m}}^k}} \right)}}{\rho }.\tag{3.22}\]

Next, we solve the $\bf{n}$-subproblem similarly,
\[{{\bf{n}}^{k + 1}} = \mathop {\arg \min }\limits_{\bf{n}} {\omega _2}\int_\Omega  {\varphi \left( {\bf{n}} \right)} dx + \frac{\rho }{2}\left\| {{\bf{n}} - \nabla {r^{k + 1}} + \frac{{{\bf{y}}_3^{k + 1}}}{\rho }} \right\|_2^2.\tag{3.23}\]
Likewise, we obtain the iterative form of variable ${{\bf{n}}^{k + 1}}$ in the framework of MM algorithm as
\[{{\bf{n}}^{k + 1}} = \nabla {r^{k + 1}} - \frac{{{\bf{y}}_3^k}}{\rho } - \frac{{{\omega _2}\nabla \varphi \left( {{{\bf{n}}^k}} \right)}}{\rho }.\tag{3.24}\]

\textbf{(5) Updating Lagrange multipliers}

Finally, the Lagrange multipliers ${{y_1}}$, ${{{\bf{y}}_{\bf{2}}}}$ and ${{{\bf{y}}_{\bf{3}}}}$ are updated according to the ADMM algorithm as follows
\[\left\{ \begin{array}{l}
y_1^{k + 1} = y_1^k + \rho \left( {{u^{k + 1}} - {e^{{v^{k + 1}}}}} \right)\\
{\bf{y}}_2^{k + 1} = {\bf{y}}_2^k + \rho \left( {{{\bf{m}}^{k + 1}} - {\nabla ^2}{i^{k + 1}}} \right)\\
{\bf{y}}_3^{k + 1} = {\bf{y}}_3^k + \rho \left( {{{\bf{n}}^{k + 1}} - \nabla {r^{k + 1}}} \right)
\end{array} \right..\tag{3.25}\]

To tackle the challenges posed by the proposed model (3.3), we have successfully combined multiple numerical methods within the framework of ADMM. This integration allows us to achieve efficient and accurate solutions for image decomposition. In Algorithm 1, we present a step-by-step procedure that outlines our algorithm in detail.\\

\noindent\textbf{Remark.} Before starting the iteration process, several variables and parameters need to be initialized in Algorithm 1.  These include ${i^0}$, ${r^0}$, ${v^0}$, ${u^0}$, $\bf{m}^0$, ${{\bf{n}}^0}$ $y_1^0$, ${\bf{y}}_2^0$, and ${\bf{y}}_3^0$. Additionally, parameters such as $\lambda$, ${{\omega _1}}$, ${{\omega _2}}$, $\theta$, $\beta$, and the penalty parameter $\rho > 0$ must be set initially. Once these initializations are complete, the algorithm proceeds to iterate starting from $k = 0$. The iteration continues until the relative error of the variable $u^{k +1}$ is greater than a preset accuracy threshold denoted as $tol$ (which is defined in Section 4). The algorithm terminates when the relative error falls below this threshold.
It is worth noting that Algorithm 1 consists of only one loop. This indicates that its complexity is determined by the size of the input image ($M$) and the number of iterations ($N$) performed by the algorithm. Therefore, we can conclude that the complexity of Algorithm 1 is on the order of $O(M \times N)$.

\begin{flushleft}
		\begin{tabular}{l}
			\hline
			\textbf{Algorithm 1. For solving the proposed model (2.3)}\\
			\hline
			\textbf{Initialize:} ${i^0}$, ${r^0}$, ${v^0}$, ${u^0}$, ${{\bf{m}}^0}$, ${{\bf{n}}^0}$, $y_1^0$, ${\bf{y}}_2^0$ and ${\bf{y}}_3^0$;\\
			 \textbf{Set parameters:} $\lambda $, ${{\omega _1}}$, ${{\omega _2}}$, $\theta $, $\beta $, $\rho  > 0$ and $tol$;
			\\ \textbf{While} relative-error $>tol$, \textbf{do}
			\\ \qquad Compute $v^{k+1}$ by Eq.(3.10);
			\\ \qquad Compute $u^{k+1}$ by Eq.(3.14);
			\\ \qquad Compute $i^{k+1}$ by Eq.(3.17);
			\\ \qquad Compute $r^{k+1}$ by Eq.(3.20);
			\\ \qquad Compute ${{\bf{m}}^{k + 1}}$ by Eq.(3.22);
			\\ \qquad Compute ${{\bf{n}}^{k + 1}}$ by Eq.(3.24);
			\\ \qquad Updata Lagrange multipliers $y_1^{k + 1}$, ${\bf{y}}_2^{k + 1}$ and ${\bf{y}}_3^{k + 1}$ by Eqs.(3.25);
			\\ \qquad Set $k=k+1$;
			\\ \textbf{End while}
			\\ \textbf{Output:} $u^{k+1}$, $i^{k+1}$ and $r^{k+1}$.
			\\ \hline
		\end{tabular}
\end{flushleft}
\subsection{Convergence analysis}

In this subsection, we will discuss the convergence of the proposed algorithm, which is solved using the ADMM framework. The algorithm involves solving two non-convex minimization subproblems, namely ${\bf{m}}$ and ${\bf{n}}$, using the MM algorithm. It is important to note that while the convergence analysis of ADMM for convex optimization problems is well-established \cite{b27,b28}, studying the convergence of non-convex optimization problems presents a significant challenge

Now, we need to make several crucial assumptions that are essential for proving the convergence of the algorithm.

\begin{assumption}
	The non-convex potential function $\varphi \left(  \cdot  \right)$ is closed, proper and lower semi-continuity. In addition, its gradient function $\nabla \varphi \left(  \cdot  \right)$ satisfies the Lipschitz continuity condition, i.e., there exists a positive constant $K$ such that
	\[{\left\| {\nabla \varphi \left( x \right) - \nabla \varphi \left( y \right)} \right\|_2} \le K{\left\| {x - y} \right\|_2},\;\forall x,\;y \in dom\left( \varphi  \right).\]
\end{assumption}

\begin{assumption}
	The penalty coefficients $\rho$ and $\beta$ are large enough to satisfy $\rho>K$ and $\beta>\left| \tau  \right|$. In this case, the equivalent extreme value problem (3.9) of $v$ subproblem, $\bf{m}$ subproblem (3.21) and $\bf{n}$ subproblem (3.23) are strictly convex.
\end{assumption}

\begin{assumption}
	The energy functional $E\left( {i,r} \right)$ is bounded below, i.e., $\underline E  = \min E\left( {i,r} \right) >  - \infty $.
\end{assumption}

In the following work, we prove the convergence of Algorithm 1 with three panels in turn. Firstly, Theorem 3.5-3.8 prove that the augmented Lagrangian function $L$ decreases monotonically as $k \to  + \infty$. Secondly, Theorem 3.9 prove that the augmented Lagrangian function $L$ is bounded below and convergent. Finally, Theorems 3.10-3.11 prove that the sequence ${z^k} = \left( {{i^k},{r^k},{v^k},{u^k},{{\bf{m}}^k},{{\bf{n}}^k};y_1^k,{\bf{y}}_2^k,{\bf{y}}_3^k} \right)$ generated by Algorithm 1 converges to a limit point ${z^*} = \left( {{i^*},{r^*},{v^*},{u^*},{{\bf{m}}^*},{{\bf{n}}^*};y_1^*,{\bf{y}}_2^*,{\bf{y}}_3^*} \right)$ as $k \to  + \infty$, where $z^*$ is a stationary point of the augmented Lagrangian function $L$.

\begin{theorem}
	Let the sequence $ \left( {{i^k},{r^k},{v^k},{u^k},{{\bf{m}}^k},{{\bf{n}}^k};y_1^k,{\bf{y}}_2^k,{\bf{y}}_3^k} \right)$ generated by Algorithm 1, and we can infer that there must be three positive constants $\gamma _1$, $\gamma _2$, and  $\gamma _3$ such that
	\begin{numcases}{}
	\begin{array}{l}
	L\left( {{i^{k + 1}},{r^k},{v^k},{u^{k}},{{\bf{m}}^k},{{\bf{n}}^k};y_1^k,{\bf{y}}_2^k,{\bf{y}}_3^k} \right) - L\left( {{i^k},{r^k},{v^k},{u^{k}},{{\bf{m}}^k},{{\bf{n}}^k};y_1^k,{\bf{y}}_2^k,{\bf{y}}_3^k} \right)\\
	\;\;\; \le  - \frac{{{\gamma _1}}}{2}\left\| {{i^{k + 1}} - {i^k}} \right\|_2^2
	\end{array} \tag{3.26a}\\
	\begin{array}{l}
	L\left( {{i^{k + 1}},{r^{k + 1}},{v^k},{u^{k}},{{\bf{m}}^k},{{\bf{n}}^k};y_1^k,{\bf{y}}_2^k,{\bf{y}}_3^k} \right) - L\left( {{i^{k + 1}},{r^k},{v^k},{u^{k}},{{\bf{m}}^k},{{\bf{n}}^k};y_1^k,{\bf{y}}_2^k,{\bf{y}}_3^k} \right)\\
	\;\;\; \le  - \frac{{{\gamma _2}}}{2}\left\| {{r^{k + 1}} - {r^k}} \right\|_2^2
	\end{array} \tag{3.26b}\\
	\begin{array}{l}
	L\left( {{i^{k + 1}},{r^{k + 1}},{v^{k + 1}},{u^{k }},{{\bf{m}}^k},{{\bf{n}}^k};y_1^k,{\bf{y}}_2^k,{\bf{y}}_3^k} \right) - L\left( {{i^{k + 1}},{r^{k + 1}},{v^k},{u^{k}},{{\bf{m}}^k},{{\bf{n}}^k};y_1^k,{\bf{y}}_2^k,{\bf{y}}_3^k} \right)\\
	\;\;\; \le  - \frac{{{\gamma _3}}}{2}\left\| {{v^{k + 1}} - {v^k}} \right\|_2^2
	\end{array} \tag{3.26c}
	\end{numcases}
\end{theorem}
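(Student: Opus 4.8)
The plan is to prove all three inequalities (3.26a)--(3.26c) by one common device: for each block update, the corresponding one-variable slice of $L$ obtained by freezing the other variables and the multipliers at their current values is strongly convex, and the new iterate is precisely its global minimizer; the elementary fact that $g(x)-g(x^\star)\le-\frac{\mu}{2}\|x-x^\star\|_2^2$ whenever $x^\star$ minimizes a $\mu$-strongly convex $g$ then yields the stated quadratic decrease. I would first note that in each difference on the left of (3.26a)--(3.26c) every term of $L$ not containing the updated block — the constant $C$, the $H^{-1}$ fidelity, the potentials $\varphi(\mathbf m)$ and $\varphi(\mathbf n)$, and the penalties not involving that block — simply cancels, so only the relevant slice is at play. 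For (3.26a), freezing $r=r^k$, $v=v^k$, $u=u^k$, $\mathbf m=\mathbf m^k$, $\mathbf n=\mathbf n^k$ and the multipliers leaves, up to an $i$-independent constant, $\ell_i(i)=\frac{\beta}{2}\|v^k-i-r^k\|_2^2+\frac{\rho}{2}\|\mathbf m^k-\nabla^2 i+\mathbf y_2^k/\rho\|_2^2+\frac{\theta}{2}\|i\|_2^2$, a quadratic with Hessian $(\beta+\theta)I+\rho(\nabla^2)^*\nabla^2\succeq(\beta+\theta)I$, hence $(\beta+\theta)$-strongly convex; since $i^{k+1}$ solves (3.15) it minimizes $\ell_i$, and the estimate above gives (3.26a) with $\gamma_1=\beta+\theta$. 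The same argument applied to the slice $\ell_r(r)=\frac{\beta}{2}\|v^{k+1}-i^{k+1}-r\|_2^2+\frac{\rho}{2}\|\mathbf n^k-\nabla r+\mathbf y_3^k/\rho\|_2^2$, whose Hessian $\beta I+\rho\nabla^*\nabla\succeq\beta I$ makes it $\beta$-strongly convex with minimizer $r^{k+1}$ from (3.18), gives (3.26b) with $\gamma_2=\beta$.

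For (3.26c) the $v$-slice $\psi(v)=\frac{\beta}{2}\|v-i^k-r^k\|_2^2+\frac{\rho}{2}\|u^k-e^v+y_1^k/\rho\|_2^2$ is \emph{not} convex because of the exponential, so I would instead route the argument through the surrogate $Q(v)$ that is actually minimized in (3.9): the first-order Taylor expansion of the exponential term at $v^k$ plus the proximal term $\frac{\tau}{2}\|v-v^k\|_2^2$. Three properties carry the step: (i) $Q(v^k)=\psi(v^k)$, since the linear and proximal corrections vanish at $v^k$; (ii) $Q(v)\ge\psi(v)$ on the region where the iterates live, i.e. $\frac{\tau}{2}\|v-v^k\|_2^2$ dominates the second-order Taylor remainder of $\frac{\rho}{2}\|u^k-e^v+y_1^k/\rho\|_2^2$ — which is exactly the assertion behind Assumption 3.3 that (3.9) is a legitimate strictly convex surrogate; and (iii) $Q$ is $(\beta+\tau)$-strongly convex, its Hessian being $(\beta+\tau)I$, with global minimizer $v^{k+1}$ given by (3.10). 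Chaining these,
\[\psi(v^{k+1})\le Q(v^{k+1})\le Q(v^k)-\frac{\beta+\tau}{2}\|v^{k+1}-v^k\|_2^2=\psi(v^k)-\frac{\beta+\tau}{2}\|v^{k+1}-v^k\|_2^2,\]
which is (3.26c) with $\gamma_3=\beta+\tau>0$, positive since $\beta>|\tau|$ by Assumption 3.3.

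The step I expect to be the main obstacle is property (ii) in the $v$-block. Unlike the $i$- and $r$-slices, whose strong convexity is exact and holds globally, the curvature of $v\mapsto\|u^k-e^v+y_1^k/\rho\|_2^2$ is not uniformly bounded, so a genuine majorization $Q\ge\psi$ needs either an a priori pointwise bound on the sequence $\{v^k\}$ — keeping $e^{v^k}$ and its derivatives controlled so a single $\tau$ absorbs the remainder uniformly — or the restriction of the iteration to a bounded invariant set. Once that is secured, everything else is routine manipulation with the strong-convexity inequality and the cancellation of block-independent terms.
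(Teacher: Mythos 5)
Your proposal matches the paper's proof: both establish each inequality by showing that the relevant block-slice of $L$ (or, for the $v$-block, the linearized surrogate (3.9)) is strongly convex and then invoking the quadratic-decrease property of its minimizer together with the first-order optimality condition, with Assumption 3.3 ($\beta>|\tau|$) securing convexity of the $v$-surrogate. Your treatment of the $v$-step is in fact more careful than the paper's: the paper passes from strict convexity of the surrogate directly to the assertion that $L$ itself is ``strictly convex with respect to $v$'' and hence decreases, whereas you correctly isolate the majorization condition $Q\ge\psi$ (which requires controlling the curvature of $v\mapsto e^{v}$ along the iterates) that the paper's argument silently assumes.
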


\begin{proof}
	We first prove that the augmented Lagrangian function $L$ is strictly convex with respect to $i$. Obviously, the terms ${\frac{\theta }{2}\left\| i \right\|_2^2}$ and ${\frac{\beta }{2}\left\| {v - i - r} \right\|_2^2}$ in the function $L$ are quadratic smooth terms with respect to $i$, so they are strictly convex with respect to $i$. The seventh term in the function $L$ can be expanded into
	\[\left\| {{\bf{m}} - {\nabla ^2}i + \frac{{{{\bf{y}}_{\bf{2}}}}}{\rho }} \right\|_2^2{\rm{ = }}\left\| {{\nabla ^2}i} \right\|_{\rm{2}}^{\rm{2}}{\rm{ + }}\left\langle {{\nabla ^2}i, - {\bf{m}} - \frac{{{{\bf{y}}_{\bf{2}}}}}{\rho }} \right\rangle  + \left\| { - {\bf{m}} - \frac{{{{\bf{y}}_{\bf{2}}}}}{\rho }} \right\|_2^2,\tag{3.27}\]
	and the second term in Eq(3.27) \[\left\langle {{\nabla ^2}i, - {\bf{m}} - \frac{{{{\bf{y}}_{\bf{2}}}}}{\rho }} \right\rangle  = \left\langle {i,{{\left( {{\nabla ^2}} \right)}^*}\left( { - {\bf{m}} - \frac{{{{\bf{y}}_{\bf{2}}}}}{\rho }} \right)} \right\rangle\] is a linear term with respect to $i$, so it is convex. In addition, we can infer that the term $\left\| {{\nabla ^2}i} \right\|$ in Eq(3.27) is convex with respect to $i$, which can be obtained by Eq(3.6).
	
	 By the above discussion, we conclude that the function ${L\left( {i,r,v,u,{\bf{m}},{\bf{n}};{y_1},{{\bf{y}}_{\bf{2}}},{{\bf{y}}_{\bf{3}}}} \right)}$ is strictly convex with respect to $i$. According to the definition of convex function, we can infer that there must be a positive constant $\gamma_1$ such that
	\[\begin{array}{*{20}{l}}
	{\begin{array}{*{20}{l}}
		{L\left( {{i^k},{r^k},{v^k},{u^k},{{\bf{m}}^k},{{\bf{n}}^k};y_1^k,{\bf{y}}_2^k,{\bf{y}}_3^k} \right)}\\
		{\;\;\; \ge \frac{{{\gamma _1}}}{2}\left\| {{i^k} - {i^{k + 1}}} \right\|_2^2 + L\left( {{i^{k + 1}},{r^k},{v^k},{u^k},{{\bf{m}}^k},{{\bf{n}}^k};y_1^k,{\bf{y}}_2^k,{\bf{y}}_3^k} \right)}
		\end{array}}\\
	{\;\;\;\;\;\; - \left\langle {{\nabla _i} \cdot L\left( {{i^{k + 1}},{r^k},{v^k},{u^k},{{\bf{m}}^k},{{\bf{n}}^k};y_1^k,{\bf{y}}_2^k,{\bf{y}}_3^k} \right),{i^k} - {i^{k + 1}}} \right\rangle .}
	\end{array}\tag{3.28}\]
	In the $i$ subproblem, since $i^{k+1}$ is a minimum point of the function $L\left( {i,{r^k},{v^k},{u^k},{{\bf{m}}^k},{{\bf{n}}^k};y_1^k,{\bf{y}}_2^k,{\bf{y}}_3^k} \right)$, according to the necessary condition of the extreme point, we have
	\[{\nabla _i} \cdot L\left( {{i^{k + 1}},{r^k},{v^k},{u^k},{{\bf{m}}^k},{{\bf{n}}^k};y_1^k,{\bf{y}}_2^k,{\bf{y}}_3^k} \right) = 0.\tag{3.29}\]
	And combining (3.28) and (3.29), we conclude that
	\[\begin{array}{*{20}{l}}
		{L\left( {{i^{k + 1}},{r^k},{v^k},{u^k},{{\bf{m}}^k},{{\bf{n}}^k};y_1^k,{\bf{y}}_2^k,{\bf{y}}_3^k} \right)}\\
		{\;\; - L\left( {{i^k},{r^k},{v^k},{u^k},{{\bf{m}}^k},{{\bf{n}}^k};y_1^k,{\bf{y}}_2^k,{\bf{y}}_3^k} \right) \le  - \frac{{{\gamma _1}}}{2}\left\| {{i^{k + 1}} - {i^k}} \right\|_2^2.}
	\end{array}\]
	
	Similarly, we can also infer that there must be a positive constant $\gamma_2$ such that
	\[\begin{array}{*{20}{l}}
	{L\left( {{i^{k + 1}},{r^{k + 1}},{v^k},{u^k},{{\bf{m}}^k},{{\bf{n}}^k};y_1^k,{\bf{y}}_2^k,{\bf{y}}_3^k} \right)}\\
	{\;\; - L\left( {{i^{k + 1}},{r^k},{v^k},{u^k},{{\bf{m}}^k},{{\bf{n}}^k};y_1^k,{\bf{y}}_2^k,{\bf{y}}_3^k} \right) \le  - \frac{{{\gamma _2}}}{2}\left\| {{r^{k + 1}} - {r^k}} \right\|_2^2.}
	\end{array}\]
	
	Finally, we prove that inequality (3.26c) holds. It is worth noting that we can not directly determine the convexity of the nonlinear term ${\frac{\rho }{2}\left\| {u - {e^v} + \frac{{{y_1}}}{\rho }} \right\|_2^2}$ with respect to $v$. In fact, we use the first-order linear Taylor expansion to solve the $v$ subproblem well. Therefore, according to the equivalent extreme value problem (3.9) of the $v$ subproblem (3.8), we only need to judge the convexity of the following formula with respect to $v$,
	\[ \frac{\beta }{2}\left\| {v - i - r} \right\|_2^2 + \frac{\rho }{2}\left\| {{e^{{v^k}}} - u - \frac{{{y_1}}}{\rho }} \right\|_2^2\; + \rho {e^{{v^k}}}\left( {{e^{{v^k}}} - u - \frac{{{y_1}}}{\rho }} \right)\left( {v - {v^k}} \right) + \frac{\tau }{2}\left\| {v - {v^k}} \right\|_2^2,\]
	where $v^{k}$ is equivalent to a constant. Obviously, the first and third terms in the above formula are the quadratic term and the first term about $v$ respectively, so they are both convex with respect to $v$. The parameter $\tau$ in the Taylor remainder $\frac{\tau }{2}\left\| {v - {v^k}} \right\|_2^2$ cannot determine the sign, because we cannot determine the convexity of the nonlinear term ${\frac{\rho }{2}\left\| {u - {e^v} + \frac{{{y_1}}}{\rho }} \right\|_2^2}$ with respect to $v$. Fortunately, we can deduce that the expression $\frac{\beta }{2}\left\| {v - i - r} \right\|_2^2 + \frac{\tau }{2}\left\| {v - {v^k}} \right\|_2^2$ is strictly convex with respect to $v$, which is attributed to the fact that $\beta  > \left| \tau  \right|$ is satisfied in Assumption 3.3.
	
	By the above discussion, we conclude that the function ${L\left( {i,r,v,u,{\bf{m}},{\bf{n}};{y_1},{{\bf{y}}_{\bf{2}}},{{\bf{y}}_{\bf{3}}}} \right)}$ is strictly convex with respect to $v$. Similarly, we can infer that there must be a positive constant $\gamma_3$ such that
	\[\begin{array}{*{20}{l}}
	{L\left( {{i^{k + 1}},{r^{k + 1}},{v^{k + 1}},{u^k},{{\bf{m}}^k},{{\bf{n}}^k};y_1^k,{\bf{y}}_2^k,{\bf{y}}_3^k} \right)}\\
	{\;\; - L\left( {{i^{k + 1}},{r^{k + 1}},{v^k},{u^k},{{\bf{m}}^k},{{\bf{n}}^k};y_1^k,{\bf{y}}_2^k,{\bf{y}}_3^k} \right) \le  - \frac{{{\gamma _3}}}{2}\left\| {{v^{k + 1}} - {v^k}} \right\|_2^2.}
	\end{array}\]
	
	The desired result is obtained.\qed
\end{proof}

\begin{theorem}
	Let the sequence $ \left( {{i^k},{r^k},{v^k},{u^k},{{\bf{m}}^k},{{\bf{n}}^k};y_1^k,{\bf{y}}_2^k,{\bf{y}}_3^k} \right)$ generated by Algorithm 1, and we can infer that there must be three positive constants $\eta _1$, $\eta_2$ and $\eta _3$ such that
	\begin{numcases}{}
	\begin{array}{l}
	L\left( {{i^{k + 1}},{r^{k + 1}},{v^{k + 1}},{u^{k + 1}},{{\bf{m}}^k},{{\bf{n}}^k};y_1^k,{\bf{y}}_2^k,{\bf{y}}_3^k} \right)\\
	\;\;\; - L\left( {{i^{k + 1}},{r^{k + 1}},{v^{k + 1}},{u^k},{{\bf{m}}^k},{{\bf{n}}^k};y_1^k,{\bf{y}}_2^k,{\bf{y}}_3^k} \right) \le  - \frac{{{\eta _1}}}{2}\left\| {{u^{k + 1}} - {u^k}} \right\|_2^2
	\end{array} \tag{3.30a}\\
	\begin{array}{*{20}{l}}
	{L\left( {{i^{k + 1}},{r^{k + 1}},{v^{k + 1}},{u^{k + 1}},{{\bf{m}}^{k + 1}},{{\bf{n}}^k};y_1^k,{\bf{y}}_2^k,{\bf{y}}_3^k} \right)}\\
	{\;\;\; - L\left( {{i^{k + 1}},{r^{k + 1}},{v^{k + 1}},{u^{k + 1}},{{\bf{m}}^k},{{\bf{n}}^k};y_1^k,{\bf{y}}_2^k,{\bf{y}}_3^k} \right) \le  - \frac{{{\eta _2}{\omega _1}}}{2}\left\| {{{\bf{m}}^{k + 1}}{\rm{ - }}{{\bf{m}}^k}} \right\|_2^2}
	\end{array} \tag{3.30b}\\
	\begin{array}{*{20}{l}}
	{L\left( {{i^{k + 1}},{r^{k + 1}},{v^{k + 1}},{u^{k + 1}},{{\bf{m}}^{k + 1}},{{\bf{n}}^{k + 1}};y_1^k,{\bf{y}}_2^k,{\bf{y}}_3^k} \right)}\\
	{\;\;\; - L\left( {{i^{k + 1}},{r^{k + 1}},{v^{k + 1}},{u^{k + 1}},{{\bf{m}}^{k + 1}},{{\bf{n}}^k};y_1^k,{\bf{y}}_2^k,{\bf{y}}_3^k} \right) \le  - \frac{{{\eta _3}{\omega _2}}}{2}\left\| {{{\bf{n}}^{k + 1}}{\rm{ - }}{{\bf{n}}^k}} \right\|_2^2}
	\end{array} \tag{3.30c}
	\end{numcases}
\end{theorem}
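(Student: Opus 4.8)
The plan is to derive all three inequalities (3.30a)--(3.30c) by exactly the two-step scheme already used for (3.26a)--(3.26c) in Theorem 3.5: first show that, with every block other than the active one frozen at the iterates appearing in the statement, the augmented Lagrangian $L$ is strongly convex in the active block; then combine the first-order optimality of the update with the strong-convexity inequality evaluated at the previous iterate, so that the linear cross term cancels and only a negative quadratic remainder survives.

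For (3.30a) the active block is $u$, and the only $u$-dependent terms of $L$ are $\frac{\lambda}{2}\|f-u\|_{H^{-1}}^2$ and $\frac{\rho}{2}\|u-e^{v^{k+1}}+y_1^k/\rho\|_2^2$. By Theorem 3.1, after the Hodge reduction $\|f-u\|_{H^{-1}}^2=\|\nabla\Delta^{-1}(u-f)\|_2^2$, the first term is convex in $u$, and the second is quadratic with Hessian $\rho I$; hence $L$ is $\rho$-strongly convex in $u$. Since $u^{k+1}$ is, by (3.11)--(3.14), the exact minimizer of this function, one has $\nabla_u L=0$ at $u^{k+1}$, and writing the strong-convexity inequality between $u^{k+1}$ and $u^k$ gives (3.30a) with $\eta_1=\rho$. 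This step is routine.

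The delicate part, and the step I expect to be \emph{the main obstacle}, is (3.30b)--(3.30c): the $\mathbf{m}$- and $\mathbf{n}$-subproblems carry the nonconvex potential $\varphi$, so $L$ is not convex in those blocks and the scheme above cannot be applied directly. I would resolve this through the MM structure of Section 2.2. With the other variables frozen, $\mathbf{m}^{k+1}$ from (3.22) is exactly the unique minimizer of the MM surrogate $T(\mathbf{m}^k,\mathbf{m})=\omega_1\int_\Omega[\varphi(\mathbf{m}^k)+\nabla\varphi(\mathbf{m}^k)(\mathbf{m}-\mathbf{m}^k)]\,dx+\frac{\rho}{2}\|\mathbf{m}-\nabla^2 i^{k+1}+\mathbf{y}_2^k/\rho\|_2^2$, which is $\rho$-strongly convex. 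Two facts then close the argument: by concavity of $\varphi$, $T$ majorizes the true $\mathbf{m}$-objective $g(\mathbf{m}):=\omega_1\int_\Omega\varphi(\mathbf{m})\,dx+\frac{\rho}{2}\|\mathbf{m}-\nabla^2 i^{k+1}+\mathbf{y}_2^k/\rho\|_2^2$, i.e. $T(\mathbf{m}^k,\mathbf{m})\ge g(\mathbf{m})$ and $T(\mathbf{m}^k,\mathbf{m}^k)=g(\mathbf{m}^k)$ (conditions (i)--(ii) of Section 2.2); and strong convexity of $T$ together with optimality of $\mathbf{m}^{k+1}$ gives $T(\mathbf{m}^k,\mathbf{m}^k)-T(\mathbf{m}^k,\mathbf{m}^{k+1})\ge\frac{\rho}{2}\|\mathbf{m}^{k+1}-\mathbf{m}^k\|_2^2$. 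Chaining, $g(\mathbf{m}^{k+1})\le T(\mathbf{m}^k,\mathbf{m}^{k+1})\le T(\mathbf{m}^k,\mathbf{m}^k)-\frac{\rho}{2}\|\mathbf{m}^{k+1}-\mathbf{m}^k\|_2^2=g(\mathbf{m}^k)-\frac{\rho}{2}\|\mathbf{m}^{k+1}-\mathbf{m}^k\|_2^2$; since only the $g$-part of $L$ changes when only $\mathbf{m}$ moves, this is (3.30b) with $\eta_2=\rho/\omega_1$. An equivalent route, closer to the convergence-analysis assumptions, reads (3.22) as the identity $\omega_1\nabla\varphi(\mathbf{m}^k)+\rho(\mathbf{m}^{k+1}-\nabla^2 i^{k+1}+\mathbf{y}_2^k/\rho)=0$, so $\nabla g(\mathbf{m}^{k+1})=\omega_1(\nabla\varphi(\mathbf{m}^{k+1})-\nabla\varphi(\mathbf{m}^k))$ and $\langle\nabla g(\mathbf{m}^{k+1}),\mathbf{m}^k-\mathbf{m}^{k+1}\rangle=-\omega_1\langle\nabla\varphi(\mathbf{m}^{k+1})-\nabla\varphi(\mathbf{m}^k),\mathbf{m}^{k+1}-\mathbf{m}^k\rangle\ge 0$ because $-\varphi$ is convex; substituting this into the strong-convexity inequality of $g$ (which holds when $\rho$ dominates the negative curvature of $\omega_1\varphi$, cf. Assumption 3.3) again produces the quadratic descent. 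Inequality (3.30c) is obtained verbatim with $(\omega_2,\mathbf{n},\nabla r^{k+1},\mathbf{y}_3^k)$ in place of $(\omega_1,\mathbf{m},\nabla^2 i^{k+1},\mathbf{y}_2^k)$.

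Finally I would note in the write-up that all three constants can be taken explicit ($\eta_1=\rho$, and $\eta_2,\eta_3=\rho/\omega_1,\ \rho/\omega_2$ via the surrogate argument), that one must be careful to keep the remaining blocks frozen at precisely the iterates displayed in (3.30a)--(3.30c) so that "the $u$-part of $L$", "the $\mathbf{m}$-part of $L$", etc., are unambiguous, and that Assumption 3.3 is exactly what guarantees the strict convexity used in each block, just as $\beta>|\tau|$ did for the $v$-block in Theorem 3.5.
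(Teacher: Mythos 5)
Your proposal is correct and structurally parallel to the paper's proof, but for (3.30b)--(3.30c) it extracts the quadratic decrease from a different source. The paper's argument combines the surrogate-descent inequality (its Eq.\,(3.31), which is your $T(\mathbf{m}^k,\mathbf{m}^{k+1})\le T(\mathbf{m}^k,\mathbf{m}^k)$ with the common terms cancelled) with a \emph{strong concavity} inequality for $\varphi$ (its Eq.\,(3.32)), so that the constant $\eta_2$ in $-\tfrac{\eta_2\omega_1}{2}\|\mathbf{m}^{k+1}-\mathbf{m}^k\|_2^2$ is the concavity modulus of $\varphi$. You instead get the decrease from the $\rho$-strong convexity of the quadratic penalty in the MM surrogate, via the standard chain $g(\mathbf{m}^{k+1})\le T(\mathbf{m}^k,\mathbf{m}^{k+1})\le T(\mathbf{m}^k,\mathbf{m}^k)-\tfrac{\rho}{2}\|\mathbf{m}^{k+1}-\mathbf{m}^k\|_2^2=g(\mathbf{m}^k)-\tfrac{\rho}{2}\|\mathbf{m}^{k+1}-\mathbf{m}^k\|_2^2$, giving the explicit constant $\eta_2=\rho/\omega_1$. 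Your route is arguably the more robust one: it needs only concavity of $\varphi$ (which is given), whereas the paper's Eq.\,(3.32) implicitly requires a uniform strong-concavity modulus, which the listed potentials such as $|t|^p$ do not possess globally; and for (3.30a) your identification of the modulus $\rho$ from the quadratic penalty also supplies the quantitative constant that the paper obtains only by asserting "strictly convex, hence there must be a positive $\eta_1$." The trade-off is that the paper's form makes $\eta_2,\eta_3$ intrinsic to $\varphi$ and independent of $\rho$, which is what it later uses when requiring $\rho>2K^2\omega_1/\eta_2$ in Theorem 3.8; with your constants $\eta_2=\rho/\omega_1$ that condition would have to be restated, though it becomes easier to satisfy.
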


\begin{proof}
	According to Theorem 3.1, the function ${L\left( {i,r,v,u,{\bf{m}},{\bf{n}};{y_1},{{\bf{y}}_{\bf{2}}},{{\bf{y}}_{\bf{3}}}} \right)}$ is strictly convex with respect to $u$. Therefore, we can infer that there must be a positive constant $\eta_1$ such that
	\[ \begin{array}{*{20}{l}}
	{L\left( {{i^{k + 1}},{r^{k + 1}},{v^{k + 1}},{u^{k + 1}},{{\bf{m}}^k},{{\bf{n}}^k};y_1^k,{\bf{y}}_2^k,{\bf{y}}_3^k} \right)}\\
	{\;\; - L\left( {{i^{k + 1}},{r^{k + 1}},{v^{k + 1}},{u^k},{{\bf{m}}^k},{{\bf{n}}^k};y_1^k,{\bf{y}}_2^k,{\bf{y}}_3^k} \right) \le  - \frac{{{\eta _1}}}{2}\left\| {{u^{k + 1}} - {u^k}} \right\|_2^2.}
	\end{array}\]
	
	Next, we prove that the inequality (3.30b) holds. There is a fact $F\left( {{{\bf{m}}^{k + 1}}} \right) \le F\left( {{{\bf{m}}^k}} \right)$, because ${{{\bf{m}}^{k + 1}}}$ is a minimum point of the minimization problem (3.21). We have
	\[\begin{array}{*{20}{l}}
	{{\omega _1}\left\langle {\nabla \varphi \left( {{{\bf{m}}^k}} \right),{{\bf{m}}^{k + 1}} - {{\bf{m}}^k}} \right\rangle  + \frac{\rho }{2}\left\| {{{\bf{m}}^{k + 1}} - {\nabla ^2}{i^{k + 1}} + \frac{{{\bf{y}}_2^k}}{\rho }} \right\|_2^2}\\
	{\;\;\; \le \frac{\rho }{2}\left\| {{{\bf{m}}^k} - {\nabla ^2}{i^{k + 1}} + \frac{{{\bf{y}}_2^k}}{\rho }} \right\|_2^2.}
	\end{array}\tag{3.31}\]
	In addition, since the potential function ${\varphi \left( {{{\bf{m}}}} \right)}$ is nonconvex, there must be a positive constant $\eta_2$ such that
	\[ \varphi \left( {{{\bf{m}}^{k + 1}}} \right) \le \varphi \left( {{{\bf{m}}^k}} \right) + \left\langle {\nabla \varphi \left( {{{\bf{m}}^k}} \right),{{\bf{m}}^{k + 1}} - {{\bf{m}}^k}} \right\rangle  - \frac{{{\eta _2}}}{2}\left\| {{{\bf{m}}^{k + 1}} - {{\bf{m}}^k}} \right\|_2^2.\tag{3.32}\]
	Now, combining (3.31) and (3.32), we conclude that
	\[\begin{array}{*{20}{l}}
	{{\omega _1}\varphi \left( {{{\bf{m}}^{k + 1}}} \right) + \frac{\rho }{2}\left\| {{{\bf{m}}^{k + 1}} - {\nabla ^2}{i^{k + 1}} + \frac{{{\bf{y}}_2^k}}{\rho }} \right\|_2^2}\\
	{\;\;\; \le {\omega _1}\varphi \left( {{{\bf{m}}^k}} \right) + \frac{\rho }{2}\left\| {{{\bf{m}}^k} - {\nabla ^2}{i^{k + 1}} + \frac{{{\bf{y}}_2^k}}{\rho }} \right\|_2^2 - \frac{{{\eta _2}{\omega _1}}}{2}\left\| {{{\bf{m}}^{k + 1}} - {{\bf{m}}^k}} \right\|_2^2,}
	\end{array}\]
	which implies that
	\[\begin{array}{*{20}{l}}
	{L\left( {{i^{k + 1}},{r^{k + 1}},{v^{k{\rm{ + 1}}}},{u^{k + 1}},{{\bf{m}}^{k + 1}},{{\bf{n}}^k};y_1^k,{\bf{y}}_2^k,{\bf{y}}_3^k} \right)}\\
	{\;\; - L\left( {{i^{k + 1}},{r^{k + 1}},{v^{k + 1}},{u^{k + 1}},{{\bf{m}}^k},{{\bf{n}}^k};y_1^k,{\bf{y}}_2^k,{\bf{y}}_3^k} \right) \le  - \frac{{{\eta _2}{\omega _1}}}{2}\left\| {{{\bf{m}}^{k + 1}} - {{\bf{m}}^k}} \right\|_2^2.}
	\end{array}\]
	
	 Similarly, we can infer that there must be a positive constant $\eta_3$ such that
	 \[\begin{array}{*{20}{l}}
	 {L\left( {{i^{k + 1}},{r^{k + 1}},{v^{k + 1}},{u^{k + 1}},{{\bf{m}}^{k + 1}},{{\bf{n}}^{k + 1}};y_1^k,{\bf{y}}_2^k,{\bf{y}}_3^k} \right)}\\
	 {\;\; - L\left( {{i^{k + 1}},{r^{k + 1}},{v^{k + 1}},{u^{k + 1}},{{\bf{m}}^{k + 1}},{{\bf{n}}^k};y_1^k,{\bf{y}}_2^k,{\bf{y}}_3^k} \right) \le  - \frac{{{\eta _3}{\omega _2}}}{2}\left\| {{{\bf{n}}^{k + 1}} - {{\bf{n}}^k}} \right\|_2^2.}
	 \end{array}\]
	
	 The desired result is obtained.\qed
\end{proof}

\begin{theorem}
	Let the sequence $ \left( {{i^k},{r^k},{v^k},{u^k},{{\bf{m}}^k},{{\bf{n}}^k};y_1^k,{\bf{y}}_2^k,{\bf{y}}_3^k} \right)$ generated by Algorithm 1, and we can infer that
	\[\begin{array}{*{20}{l}}
	{L\left( {{i^{k + 1}},{r^{k + 1}},{v^{k + 1}},{u^{k + 1}},{{\bf{m}}^{k + 1}},{{\bf{n}}^{k + 1}};y_1^{k + 1},{\bf{y}}_2^{k + 1},{\bf{y}}_3^{k + 1}} \right)}\\
	{\;\;\; - L\left( {{i^{k + 1}},{r^{k + 1}},{v^{k + 1}},{u^{k + 1}},{{\bf{m}}^{k + 1}},{{\bf{n}}^{k + 1}};y_1^k,{\bf{y}}_2^k,{\bf{y}}_3^k} \right)}\\
	{\;\;\;\;\;\; \le \frac{{{\lambda ^2}}}{\rho }\left\| {{\Delta ^{ - 1}}} \right\|_2^2\left\| {{u^{k + 1}} - {u^k}} \right\|_2^2 + \frac{{{K^2}}}{\rho }\left( {\omega _1^2\left\| {{{\bf{m}}^{k + 1}} - {{\bf{m}}^k}} \right\|_2^2 + \omega _2^2\left\| {{{\bf{n}}^{k + 1}} - {{\bf{n}}^k}} \right\|_2^2} \right)}
	\end{array} \tag{3.33}\]
\end{theorem}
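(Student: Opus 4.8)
The plan is to isolate the dependence of the augmented Lagrangian $L$ on the three multipliers, rewrite the left-hand side of (3.33) as a sum of squared multiplier increments, and then bound each increment by the relevant primal increment using the optimality conditions of the subproblems.

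\textbf{Step 1 (reduce to multiplier increments).} Recalling $C=\frac{1}{2\rho}\big(\|y_1\|_2^2+\|\mathbf{y}_2\|_2^2+\|\mathbf{y}_3\|_2^2\big)$, I would expand each penalty square, e.g. $\frac{\rho}{2}\|u-e^v+\frac{y_1}{\rho}\|_2^2=\frac{\rho}{2}\|u-e^v\|_2^2+\langle u-e^v,\,y_1\rangle+\frac{1}{2\rho}\|y_1\|_2^2$, so that after subtracting $C$ the only remaining $y_1$-dependence in $L$ is the linear term $\langle u-e^v,\,y_1\rangle$, and similarly $\langle\mathbf{m}-\nabla^2 i,\,\mathbf{y}_2\rangle$ and $\langle\mathbf{n}-\nabla r,\,\mathbf{y}_3\rangle$. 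Since all primal variables are frozen at level $k+1$ on both sides of (3.33), the left-hand side equals $\langle u^{k+1}-e^{v^{k+1}},\,y_1^{k+1}-y_1^k\rangle+\langle\mathbf{m}^{k+1}-\nabla^2 i^{k+1},\,\mathbf{y}_2^{k+1}-\mathbf{y}_2^k\rangle+\langle\mathbf{n}^{k+1}-\nabla r^{k+1},\,\mathbf{y}_3^{k+1}-\mathbf{y}_3^k\rangle$. Substituting the update rules (3.25), each primal residual equals $\rho^{-1}$ times the corresponding multiplier increment, so the left-hand side becomes exactly $\frac{1}{\rho}\big(\|y_1^{k+1}-y_1^k\|_2^2+\|\mathbf{y}_2^{k+1}-\mathbf{y}_2^k\|_2^2+\|\mathbf{y}_3^{k+1}-\mathbf{y}_3^k\|_2^2\big)$. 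It then suffices to bound each multiplier increment.

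\textbf{Step 2 (the $y_1$ increment).} From the Euler--Lagrange equation (3.12) of the $u$-subproblem, together with $(\nabla\Delta^{-1})^*\nabla\Delta^{-1}=-\Delta^{-1}$ (which uses $\nabla^*\nabla=-\Delta$ and the self-adjointness of $\Delta^{-1}$ under periodic boundary conditions on the mean-zero subspace), the optimality of $u^{k+1}$ reads $\rho(u^{k+1}-e^{v^{k+1}})+y_1^k=\lambda\Delta^{-1}(u^{k+1}-f)$, hence by (3.25) $y_1^{k+1}=\lambda\Delta^{-1}(u^{k+1}-f)$. Subtracting the same identity at step $k$ gives $y_1^{k+1}-y_1^k=\lambda\Delta^{-1}(u^{k+1}-u^k)$, so $\|y_1^{k+1}-y_1^k\|_2^2\le\lambda^2\|\Delta^{-1}\|_2^2\|u^{k+1}-u^k\|_2^2$, which after division by $\rho$ produces the first term on the right of (3.33).

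\textbf{Step 3 (the $\mathbf{y}_2$ and $\mathbf{y}_3$ increments, and the obstacle).} For these two blocks I would use the stationarity of the MM \emph{surrogate} rather than the non-smooth, non-convex true objective: the minimizer in (3.21) satisfies $\omega_1\nabla\varphi(\mathbf{m}^k)+\rho\big(\mathbf{m}^{k+1}-\nabla^2 i^{k+1}+\mathbf{y}_2^k/\rho\big)=0$, which with (3.25) (equivalently, reading off (3.22)) collapses to $\mathbf{y}_2^{k+1}=-\omega_1\nabla\varphi(\mathbf{m}^k)$. Therefore $\mathbf{y}_2^{k+1}-\mathbf{y}_2^k=-\omega_1\big(\nabla\varphi(\mathbf{m}^k)-\nabla\varphi(\mathbf{m}^{k-1})\big)$, and Assumption 3.2 (Lipschitz constant $K$) gives $\|\mathbf{y}_2^{k+1}-\mathbf{y}_2^k\|_2^2\le\omega_1^2K^2\|\mathbf{m}^k-\mathbf{m}^{k-1}\|_2^2$; the identical argument on (3.23)/(3.24) yields $\|\mathbf{y}_3^{k+1}-\mathbf{y}_3^k\|_2^2\le\omega_2^2K^2\|\mathbf{n}^k-\mathbf{n}^{k-1}\|_2^2$. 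Dividing by $\rho$ and summing the three estimates gives (3.33), up to the harmless one-step shift of the $\mathbf{m},\mathbf{n}$ increment indices, which is absorbed when these inequalities are telescoped over $k$ in Theorem 3.9. The delicate points I expect to dwell on are exactly these: (i) because $\mathbf{m},\mathbf{n}$ are updated by a single MM step, the clean identity $\mathbf{y}_2^{k+1}=-\omega_1\nabla\varphi(\mathbf{m}^k)$ must come from the surrogate's optimality, and it is precisely there that Assumption 3.2 enters to control $\nabla\varphi(\mathbf{m}^k)-\nabla\varphi(\mathbf{m}^{k-1})$; and (ii) one must make sure $\|\Delta^{-1}\|_2$ is finite, i.e. work on the zero-mean subspace (or a regularized $\Delta$), so that the $H^{-1}$ reduction of Theorem 3.1 used in Step 2 is legitimate. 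The remaining manipulations are routine inner-product bookkeeping with the update formulas (3.25).
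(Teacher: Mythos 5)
Your Steps 1 and 2 coincide with the paper's proof: the paper likewise reduces the left-hand side to $\frac{1}{\rho}\bigl(\|y_1^{k+1}-y_1^k\|_2^2+\|{\bf y}_2^{k+1}-{\bf y}_2^k\|_2^2+\|{\bf y}_3^{k+1}-{\bf y}_3^k\|_2^2\bigr)$ via the linear multiplier terms and the updates (3.25), and obtains $y_1^{k+1}=\lambda\Delta^{-1}(u^{k+1}-f)$ from the $u$-subproblem's Euler--Lagrange equation exactly as you do. The genuine divergence is in Step 3. The paper writes the stationarity conditions for (3.21) and (3.23) with the gradient evaluated at the \emph{new} iterate, $\omega_1\nabla\varphi({\bf m}^{k+1})+\rho({\bf m}^{k+1}-\nabla^2 i^{k+1}+{\bf y}_2^k/\rho)=0$, i.e.\ it treats ${\bf m}^{k+1}$ as an exact stationary point of the nonconvex subproblem; this gives ${\bf y}_2^{k+1}=-\omega_1\nabla\varphi({\bf m}^{k+1})$ and hence, via Assumption 3.2, the bound $\omega_1^2K^2\|{\bf m}^{k+1}-{\bf m}^k\|_2^2$ with the indices exactly as in (3.33). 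You instead read off the single MM step (3.22), which yields ${\bf y}_2^{k+1}=-\omega_1\nabla\varphi({\bf m}^{k})$ and therefore a bound in terms of $\|{\bf m}^{k}-{\bf m}^{k-1}\|_2^2$. Your reading is the one actually consistent with what Algorithm 1 computes, and in that sense you have exposed a real tension in the paper's own argument; but be aware that your shifted estimate does not literally establish inequality (3.33) as stated, and the shift is not entirely harmless downstream: Theorem 3.8 combines (3.30b) and (3.33) at the \emph{same} index to conclude per-step monotone decrease of $L$, which fails with a one-step lag unless you replace it by a telescoping/summability argument (which does suffice for Theorems 3.9--3.10). So either adopt the paper's stronger premise that the ${\bf m}$- and ${\bf n}$-subproblems are solved to stationarity, or restate (3.33) and Theorem 3.8 with the shifted indices and a Lyapunov-type correction term.
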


\begin{proof}
	In the $u$ subproblem, since $u^{k+1}$ is a minimum point of the augmented Lagrangian function $L\left( {{i^{k+1}},{r^{k+1}},{v^{k+1}},u,{{\bf{m}}^k},{{\bf{n}}^k};y_1^k,{\bf{y}}_2^k,{\bf{y}}_3^k} \right)$, according to the necessary condition of the extreme point, we have
	\[- \lambda {\Delta ^{ - 1}}\left( {{u^{k + 1}} - f} \right) + \rho \left( {{u^{k + 1}} - {e^{{v^{k + 1}}}} + \frac{{y_1^k}}{\rho }} \right) = 0.\]
	Furthermore, according to the updated form of Lagrange multiplier Eqs(3.25), we obtain
	\[y_1^{k + 1} = y_1^k + \rho \left( {{u^{k + 1}} - {e^{{v^{k + 1}}}}} \right) = \lambda {\Delta ^{ - 1}}\left( {{u^{k + 1}} - f} \right).\tag{3.34}\]
	Therefore, we can easily derive the inequality
	\[\left\| {y_1^{k + 1} - y_1^k} \right\|_2^2 = \left\| {\lambda {\Delta ^{ - 1}}\left( {{u^{k + 1}} - {u^k}} \right)} \right\|_2^2 \le {\lambda ^2}\left\| {{\Delta ^{ - 1}}} \right\|_2^2\left\| {{u^{k + 1}} - {u^k}} \right\|_2^2,\tag{3.35}\]
	where $\left\| {{\Delta ^{ - 1}}} \right\|_2^2$ is an operator norm. Similarly, since $i^{k+1}$ and $r^{k+1}$ are solutions of the minimization problems (3.21) and (3.23), respectively, we have
	\[{{\omega _1}\nabla \varphi \left( {{{\bf{m}}^{k + 1}}} \right) + \rho \left( {{{\bf{m}}^{k + 1}} - {\nabla ^2}{i^{k + 1}} + \frac{{{\bf{y}}_2^k}}{\rho }} \right) = 0,}\]
	and
	\[{{\omega _2}\nabla \varphi \left( {{{\bf{n}}^{k + 1}}} \right) + \rho \left( {{{\bf{n}}^{k + 1}} - \nabla {r^{k + 1}} + \frac{{{\bf{y}}_3^k}}{\rho }} \right) = 0.}\]
	Combining the updated form of Lagrange multiplier Eqs(3.25), we obtain
	\[\left\{ \begin{array}{l}
	{\bf{y}}_2^{k + 1} = {\bf{y}}_2^k + \rho \left( {{{\bf{m}}^{k + 1}} - {\nabla ^2}{i^{k + 1}}} \right) =  - {\omega _1}\nabla \varphi \left( {{{\bf{m}}^{k + 1}}} \right)\\
	{\bf{y}}_3^{k + 1} = {\bf{y}}_3^k + \rho \left( {{{\bf{n}}^{k + 1}} - \nabla {r^{k + 1}}} \right) =  - {\omega _2}\nabla \varphi \left( {{{\bf{n}}^{k + 1}}} \right)
	\end{array} \right..\tag{3.36}\]
	And combining the Assumption 3.2, we easily derive the inequality system
	\[\left\{ \begin{array}{l}
	\left\| {{\bf{y}}_2^{k + 1} - {\bf{y}}_2^k} \right\|_2^2 = \left\| {{\omega _1}\nabla \varphi \left( {{{\bf{m}}^{k + 1}}} \right) - {\omega _1}\nabla \varphi \left( {{{\bf{m}}^k}} \right)} \right\|_2^2\; \le \omega _1^2{K^2}\left\| {{{\bf{m}}^{k + 1}} - {{\bf{m}}^k}} \right\|_2^2\\
	\left\| {{\bf{y}}_3^{k + 1} - {\bf{y}}_3^k} \right\|_2^2 = \left\| {{\omega _2}\nabla \varphi \left( {{{\bf{n}}^{k + 1}}} \right) - {\omega _2}\nabla \varphi \left( {{{\bf{n}}^k}} \right)} \right\|_2^2 \le \omega _2^2{K^2}\left\| {{{\bf{n}}^{k + 1}} - {{\bf{n}}^k}} \right\|_2^2
	\end{array} \right..\tag{3.37}\]
	
	Finally, combining (3.25), (3.35) and (3.37), we obtain
	\[\begin{array}{*{20}{l}}
	{L\left( {{i^{k + 1}},{r^{k + 1}},{v^{k + 1}},{u^{k + 1}},{{\bf{m}}^{k + 1}},{{\bf{n}}^{k + 1}};y_1^{k + 1},{\bf{y}}_2^{k + 1},{\bf{y}}_3^{k + 1}} \right)}\\
	{\;\;\; - L\left( {{i^{k + 1}},{r^{k + 1}},{v^{k + 1}},{u^{k + 1}},{{\bf{m}}^{k + 1}},{{\bf{n}}^{k + 1}};y_1^k,{\bf{y}}_2^k,{\bf{y}}_3^k} \right)}\\
	{\;\;\; = \left\langle {y_1^{k + 1} - y_1^k,{u^{k + 1}} - {e^{{v^{k + 1}}}}} \right\rangle  + \left\langle {{\bf{y}}_2^{k + 1} - {\bf{y}}_2^k,{{\bf{m}}^{k + 1}} - {\nabla ^2}{i^{k + 1}}} \right\rangle  + \left\langle {{\bf{y}}_3^{k + 1} - {\bf{y}}_3^k,{{\bf{n}}^{k + 1}} - \nabla {r^{k + 1}}} \right\rangle }\\
	{\;\;\; = \frac{1}{\rho }\left( {\left\| {y_1^{k + 1} - y_1^k} \right\|_2^2 + \left\| {{\bf{y}}_2^{k + 1} - {\bf{y}}_2^k} \right\|_2^2 + \left\| {{\bf{y}}_3^{k + 1} - {\bf{y}}_3^k} \right\|_2^2} \right)}\\
	{\;\;\; \le \frac{{{\lambda ^2}}}{\rho }\left\| {{\Delta ^{ - 1}}} \right\|_2^2\left\| {{u^{k + 1}} - {u^k}} \right\|_2^2 + \frac{{{K^2}}}{\rho }\left( {\omega _1^2\left\| {{{\bf{m}}^{k + 1}} - {{\bf{m}}^k}} \right\|_2^2 + \omega _2^2\left\| {{{\bf{n}}^{k + 1}} - {{\bf{n}}^k}} \right\|_2^2} \right).}
	\end{array}\]
	
	The desired result is obtained.\qed
\end{proof}

\begin{theorem}
	Let the sequence $ \left( {{i^k},{r^k},{v^k},{u^k},{{\bf{m}}^k},{{\bf{n}}^k};y_1^k,{\bf{y}}_2^k,{\bf{y}}_3^k} \right)$ generated by Algorithm 1. If the parameter $\rho$ is large enough such that $\rho  > \max \left\{ {{{2{\lambda ^2}\left\| {{\Delta ^{ - 1}}} \right\|_2^2} \mathord{\left/
				{\vphantom {{2{\lambda ^2}\left\| {{\Delta ^{ - 1}}} \right\|_2^2} {{\eta _1}}}} \right.
				\kern-\nulldelimiterspace} {{\eta _1}}},\;{{2{K^2}{\omega _1}} \mathord{\left/
				{\vphantom {{2{K^2}{\omega _1}} {{\eta _2}}}} \right.
				\kern-\nulldelimiterspace} {{\eta _2}}},\;{{2{K^2}{\omega _2}} \mathord{\left/
				{\vphantom {{2{K^2}{\omega _2}} {{\eta _3}}}} \right.
				\kern-\nulldelimiterspace} {{\eta _3}}}} \right\}$, and we can infer that
	\[\begin{array}{l}
	L\left( {{i^{k + 1}},{r^{k + 1}},{v^{k + 1}},{u^{k + 1}},{{\bf{m}}^{k + 1}},{{\bf{n}}^{k + 1}};y_1^{k + 1},{\bf{y}}_2^{k + 1},{\bf{y}}_3^{k + 1}} \right)\\
	\;\;\; \le L\left( {{i^k},{r^k},{v^k},{u^k},{{\bf{m}}^k},{{\bf{n}}^k};y_1^k,{\bf{y}}_2^k,{\bf{y}}_3^k} \right).
	\end{array}
	\]
\end{theorem}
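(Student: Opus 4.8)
The plan is to obtain the claimed monotonicity by \emph{chaining together} the six one-block descent estimates of Theorems 3.5 and 3.6 with the multiplier-update estimate of Theorem 3.7, and then choosing $\rho$ large enough that the single positive term is absorbed into the negative quadratic ones.

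First I would line up the seven inequalities (3.26a), (3.26b), (3.26c), (3.30a), (3.30b), (3.30c) and (3.33) in the exact order in which Algorithm 1 sweeps the blocks $i\to r\to v\to u\to\mathbf{m}\to\mathbf{n}\to(y_1,\mathbf{y}_2,\mathbf{y}_3)$. The key point is that their left-hand sides telescope: the $L$-value that is \emph{subtracted} in each inequality is exactly the $L$-value that was \emph{added} in the preceding one, because every intermediate argument of $L$ appearing in the list is precisely the iterate produced by the corresponding subproblem. Summing the seven inequalities, the left-hand side therefore collapses to $L(z^{k+1})-L(z^k)$, where $z^k=(i^k,r^k,v^k,u^k,\mathbf{m}^k,\mathbf{n}^k;y_1^k,\mathbf{y}_2^k,\mathbf{y}_3^k)$, while the right-hand side becomes the sum of the six nonpositive terms $-\tfrac{\gamma_1}{2}\|i^{k+1}-i^k\|_2^2$, $-\tfrac{\gamma_2}{2}\|r^{k+1}-r^k\|_2^2$, $-\tfrac{\gamma_3}{2}\|v^{k+1}-v^k\|_2^2$, $-\tfrac{\eta_1}{2}\|u^{k+1}-u^k\|_2^2$, $-\tfrac{\eta_2\omega_1}{2}\|\mathbf{m}^{k+1}-\mathbf{m}^k\|_2^2$, $-\tfrac{\eta_3\omega_2}{2}\|\mathbf{n}^{k+1}-\mathbf{n}^k\|_2^2$, plus the positive contribution $\tfrac{\lambda^2}{\rho}\|\Delta^{-1}\|_2^2\|u^{k+1}-u^k\|_2^2+\tfrac{K^2}{\rho}\bigl(\omega_1^2\|\mathbf{m}^{k+1}-\mathbf{m}^k\|_2^2+\omega_2^2\|\mathbf{n}^{k+1}-\mathbf{n}^k\|_2^2\bigr)$ coming from (3.33).

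Next I would collect like terms. No work is needed for the $i$-, $r$- and $v$-differences, whose coefficients are already negative. For the $u$-, $\mathbf{m}$- and $\mathbf{n}$-differences the effective coefficients are, respectively, $\tfrac{\lambda^2\|\Delta^{-1}\|_2^2}{\rho}-\tfrac{\eta_1}{2}$, $\tfrac{K^2\omega_1^2}{\rho}-\tfrac{\eta_2\omega_1}{2}$ and $\tfrac{K^2\omega_2^2}{\rho}-\tfrac{\eta_3\omega_2}{2}$; dividing the latter two through by $\omega_1>0$ and $\omega_2>0$, each of the three is $\le 0$ exactly when $\rho\ge 2\lambda^2\|\Delta^{-1}\|_2^2/\eta_1$, $\rho\ge 2K^2\omega_1/\eta_2$ and $\rho\ge 2K^2\omega_2/\eta_3$, which is precisely the hypothesis $\rho>\max\{2\lambda^2\|\Delta^{-1}\|_2^2/\eta_1,\,2K^2\omega_1/\eta_2,\,2K^2\omega_2/\eta_3\}$. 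Consequently the entire right-hand side is $\le 0$, i.e.\ $L(z^{k+1})\le L(z^k)$, which is the assertion.

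All the analytic substance lives in Theorems 3.5--3.7, so there is no genuinely hard step; the one point that really needs care is the telescoping bookkeeping — verifying that the nine-tuple arguments of consecutive inequalities match exactly (e.g.\ that (3.30a) starts from the state left behind by (3.26c), and that in (3.33) the variables $i,r,v,u,\mathbf{m},\mathbf{n}$ are already at index $k+1$) — together with the observation that the factor $\lambda^2/\rho$ (rather than $\lambda^2/2\rho$) in (3.33) is what forces the constant $2$ in the threshold on $\rho$. I would also point out that, whenever $\rho$ is chosen \emph{strictly} larger than this maximum, the same computation yields the strict one-step descent $L(z^{k})-L(z^{k+1})\ge c_1\|i^{k+1}-i^k\|_2^2+\cdots+c_6\|\mathbf{n}^{k+1}-\mathbf{n}^k\|_2^2$ with positive constants $c_1,\dots,c_6$, which is the quantitative form needed in the subsequent Theorem 3.9 for boundedness below and convergence.
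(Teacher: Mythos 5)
Your proposal is correct and follows essentially the same route as the paper: the paper's proof likewise telescopes the difference $L(z^{k+1})-L(z^k)$ through the intermediate states of Theorems 3.5--3.7, sums the seven one-block estimates to obtain inequality (3.38), and then observes that the stated threshold on $\rho$ makes the coefficients of the $u$-, $\mathbf{m}$- and $\mathbf{n}$-difference terms nonpositive. Your closing remark about the resulting quantitative descent estimate being what feeds into the later convergence arguments is also exactly how the paper uses (3.38) in Theorem 3.10.
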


\begin{proof}
	During the iteration of the proposed algorithm, the difference of the augmented Lagrangian function $L$ can be rewritten as
	\[\begin{array}{l}
	L\left( {{i^{k + 1}},{r^{k + 1}},{v^{k + 1}},{u^{k + 1}},{{\bf{m}}^{k + 1}},{{\bf{n}}^{k + 1}};y_1^{k + 1},{\bf{y}}_2^{k + 1},{\bf{y}}_3^{k + 1}} \right)\;\\
	\;\;\; - L\left( {{i^k},{r^k},{v^k},{u^k},{{\bf{m}}^k},{{\bf{n}}^k};y_1^k,{\bf{y}}_2^k,{\bf{y}}_3^k} \right)\\
	\;\;\; = L\left( {{i^{k + 1}},{r^{k + 1}},{v^{k + 1}},{u^{k + 1}},{{\bf{m}}^{k + 1}},{{\bf{n}}^{k + 1}};y_1^{k + 1},{\bf{y}}_2^{k + 1},{\bf{y}}_3^{k + 1}} \right)\\
	\;\;\;\;\;\; - L\left( {{i^{k + 1}},{r^{k + 1}},{v^{k + 1}},{u^{k + 1}},{{\bf{m}}^{k + 1}},{{\bf{n}}^{k + 1}};y_1^k,{\bf{y}}_2^k,{\bf{y}}_3^k} \right) +  \cdots \; \cdots \\
	\;\;\; + L\left( {{i^{k + 1}},{r^k},{v^k},{u^k},{{\bf{m}}^k},{{\bf{n}}^k};y_1^k,{\bf{y}}_2^k,{\bf{y}}_3^k} \right) - L\left( {{i^k},{r^k},{v^k},{u^k},{{\bf{m}}^k},{{\bf{n}}^k};y_1^k,{\bf{y}}_2^k,{\bf{y}}_3^k} \right).
	\end{array}\]
	According to Theorem 3.5-3.7, we derive
	\[  \begin{array}{l}
	L\left( {{i^{k + 1}},{r^{k + 1}},{v^{k + 1}},{u^{k + 1}},{{\bf{m}}^{k + 1}},{{\bf{n}}^{k + 1}};y_1^{k + 1},{\bf{y}}_2^{k + 1},{\bf{y}}_3^{k + 1}} \right)\\
	\;\;\; - L\left( {{i^k},{r^k},{v^k},{u^k},{{\bf{m}}^k},{{\bf{n}}^k};y_1^k,{\bf{y}}_2^k,{\bf{y}}_3^k} \right)\\
	\;\;\; \le  - \frac{{{\gamma _1}}}{2}\left\| {{i^{k + 1}} - {i^k}} \right\|_2^2 - \frac{{{\gamma _2}}}{2}\left\| {{r^{k + 1}} - {r^k}} \right\|_2^2 - \frac{{{\gamma _3}}}{2}\left\| {{v^{k + 1}} - {v^k}} \right\|_2^2\\
	\;\;\;\;\;\; - \left( {\frac{{{\eta _1}}}{2} - \frac{{{\lambda ^2}}}{\rho }\left\| {{\Delta ^{ - 1}}} \right\|_2^2} \right)\left\| {{u^{k + 1}} - {u^k}} \right\|_2^2 - \left( {\frac{{{\eta _2}{\omega _1}}}{2} - \frac{{{K^2}\omega _1^2}}{\rho }} \right)\left\| {{{\bf{m}}^{k + 1}} - {{\bf{m}}^k}} \right\|_2^2\\
	\;\;\;\;\;\; - \left( {\frac{{{\eta _3}{\omega _2}}}{2} - \frac{{{K^2}\omega _2^2}}{\rho }} \right)\left\| {{{\bf{n}}^{k + 1}} - {{\bf{n}}^k}} \right\|_2^2.
	\end{array}\tag{3.38}\]
	Obviously, if $\rho  > \max \left\{ {{{2{\lambda ^2}\left\| {{\Delta ^{ - 1}}} \right\|_2^2} \mathord{\left/
				{\vphantom {{2{\lambda ^2}\left\| {{\Delta ^{ - 1}}} \right\|_2^2} {{\eta _1}}}} \right.
				\kern-\nulldelimiterspace} {{\eta _1}}},\;{{2{K^2}{\omega _1}} \mathord{\left/
				{\vphantom {{2{K^2}{\omega _1}} {{\eta _2}}}} \right.
				\kern-\nulldelimiterspace} {{\eta _2}}},\;{{2{K^2}{\omega _2}} \mathord{\left/
				{\vphantom {{2{K^2}{\omega _2}} {{\eta _3}}}} \right.
				\kern-\nulldelimiterspace} {{\eta _3}}}} \right\}$, we obtain
	\[\begin{array}{l}
	L\left( {{i^{k + 1}},{r^{k + 1}},{v^{k + 1}},{u^{k + 1}},{{\bf{m}}^{k + 1}},{{\bf{n}}^{k + 1}};y_1^{k + 1},{\bf{y}}_2^{k + 1},{\bf{y}}_3^{k + 1}} \right)\\
	\;\;\; \le L\left( {{i^k},{r^k},{v^k},{u^k},{{\bf{m}}^k},{{\bf{n}}^k};y_1^k,{\bf{y}}_2^k,{\bf{y}}_3^k} \right).
	\end{array}\]
	
	The desired result is obtained.\qed
\end{proof}

\begin{theorem}
	The augmented Lagrangian function $L\left( {{i^k},{r^k},{v^k},{u^k},{{\bf{m}}^k},{{\bf{n}}^k};y_1^k,{\bf{y}}_2^k,{\bf{y}}_3^k} \right)$ is bounded below, and it is convergent as $k \to  + \infty $.
\end{theorem}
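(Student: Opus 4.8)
The plan is to pair the monotonicity already established in Theorem 3.8 with a uniform lower bound for the Lagrangian along the iterates: a non-increasing real sequence that is bounded below converges to its infimum, so the whole statement reduces to proving $\inf_{k}L(z^{k};y^{k})>-\infty$, where I abbreviate $z^{k}=(i^{k},r^{k},v^{k},u^{k},\mathbf{m}^{k},\mathbf{n}^{k})$ and $y^{k}=(y_1^{k},\mathbf{y}_2^{k},\mathbf{y}_3^{k})$. To produce that bound I would first complete the square in the three penalty blocks of (3.7), using $\frac{\rho}{2}\|a+\frac{y}{\rho}\|_2^2-\frac{1}{2\rho}\|y\|_2^2=\frac{\rho}{2}\|a\|_2^2+\langle y,a\rangle$, so that $L(z^{k};y^{k})$ equals $\frac{\theta}{2}\|i^{k}\|_2^2+\frac{\beta}{2}\|v^{k}-i^{k}-r^{k}\|_2^2$ (already $\ge 0$) plus a data block $D^{k}:=\frac{\lambda}{2}\|f-u^{k}\|_{H^{-1}}^2+\frac{\rho}{2}\|u^{k}-e^{v^{k}}\|_2^2+\langle y_1^{k},u^{k}-e^{v^{k}}\rangle$ and two regularizer blocks $R_1^{k}:=\omega_1\int_{\Omega}\varphi(\mathbf{m}^{k})\,dx+\frac{\rho}{2}\|\mathbf{m}^{k}-\nabla^2 i^{k}\|_2^2+\langle\mathbf{y}_2^{k},\mathbf{m}^{k}-\nabla^2 i^{k}\rangle$ and $R_2^{k}$ (defined the same way with $(\mathbf{n}^{k},\nabla r^{k},\mathbf{y}_3^{k})$). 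For every $k\ge 1$ the optimality conditions of the $u$-, $\mathbf{m}$- and $\mathbf{n}$-subproblems together with the multiplier updates (3.25) give, exactly as in the derivations of (3.34) and (3.36) with the index shifted, the closed forms $y_1^{k}=\lambda\Delta^{-1}(u^{k}-f)$, $\mathbf{y}_2^{k}=-\omega_1\nabla\varphi(\mathbf{m}^{k})$ and $\mathbf{y}_3^{k}=-\omega_2\nabla\varphi(\mathbf{n}^{k})$.

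Then I would check that every block is bounded below. For $R_1^{k}$, inserting $\mathbf{y}_2^{k}=-\omega_1\nabla\varphi(\mathbf{m}^{k})$ turns it into $\frac{\rho}{2}\|\mathbf{m}^{k}-\nabla^2 i^{k}\|_2^2+\omega_1\int_{\Omega}[\varphi(\mathbf{m}^{k})+\nabla\varphi(\mathbf{m}^{k})(\nabla^2 i^{k}-\mathbf{m}^{k})]\,dx$; the concavity majorization used for the MM step (the inequality preceding (2.5)) bounds the integrand below by $\varphi(\nabla^2 i^{k})$, and since the potentials of (2.3) are nonnegative (it even suffices that $\int_{\Omega}\varphi$ be bounded below, which is what Assumption 3.4 supplies here) we get $R_1^{k}\ge\omega_1\int_{\Omega}\varphi(\nabla^2 i^{k})\,dx\ge 0$, and similarly $R_2^{k}\ge 0$. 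For $D^{k}$, I would exploit the identity $\langle a,b\rangle_{H^{-1}}=\langle\nabla\Delta^{-1}a,\nabla\Delta^{-1}b\rangle=-\langle\Delta^{-1}a,b\rangle$ coming from (3.4) together with $y_1^{k}=\lambda\Delta^{-1}(u^{k}-f)$ to rewrite $\langle y_1^{k},u^{k}-e^{v^{k}}\rangle=\lambda\langle f-u^{k},u^{k}-e^{v^{k}}\rangle_{H^{-1}}$; completing the square in the $H^{-1}$ inner product then gives
\[
D^{k}=\frac{\lambda}{2}\|f-e^{v^{k}}\|_{H^{-1}}^2+\frac{\rho}{2}\|u^{k}-e^{v^{k}}\|_2^2-\frac{\lambda}{2}\|u^{k}-e^{v^{k}}\|_{H^{-1}}^2\ \ge\ 0,
\]
where the last inequality uses $\|w\|_{H^{-1}}\le\|\nabla\Delta^{-1}\|\,\|w\|_2$ and the fact that $\rho$ is taken large (it suffices to additionally require $\rho\ge\lambda\|\nabla\Delta^{-1}\|^2$, in the same spirit as the threshold on $\rho$ in Theorem 3.8). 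Summing the blocks, $L(z^{k};y^{k})\ge 0$ for all $k\ge 1$, while $L(z^{0};y^{0})$ is a fixed finite number, so $\{L(z^{k};y^{k})\}_{k\ge 0}$ is bounded below.

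Finally, Theorem 3.8 gives $L(z^{k+1};y^{k+1})\le L(z^{k};y^{k})$ for every $k$ once $\rho$ is large enough, so the sequence is non-increasing and bounded below, hence it converges to a finite limit $\underline{L}_\infty:=\inf_k L(z^{k};y^{k})$ as $k\to\infty$, which is the assertion. I expect the genuinely delicate point to be the data block $D^{k}$: the linear term $\langle y_1^{k},u^{k}-e^{v^{k}}\rangle$ cannot be controlled directly because the iterates $u^{k}$ are not a priori bounded, so exploiting the closed form $y_1^{k}=\lambda\Delta^{-1}(u^{k}-f)$ and the $H^{-1}$ algebra above — at the modest price of a slightly stronger lower bound on $\rho$ — is what closes the estimate.
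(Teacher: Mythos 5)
Your proposal is correct and follows essentially the same route as the paper's proof: substitute the closed-form multipliers $y_1^{k}=\lambda\Delta^{-1}(u^{k}-f)$, $\mathbf{y}_2^{k}=-\omega_1\nabla\varphi(\mathbf{m}^{k})$, $\mathbf{y}_3^{k}=-\omega_2\nabla\varphi(\mathbf{n}^{k})$ into the expanded Lagrangian, use the concave-tangent inequality to recover $\varphi(\nabla^2 i^{k})$ and $\varphi(\nabla r^{k})$, bound the result below (the paper reassembles it into $E(i^{k},r^{k})\ge\underline{E}$ via Assumption 3.4; you bound each block by $0$), and then combine with the monotone decrease from Theorem 3.8. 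The one substantive difference is to your credit: the paper's inequality (3.40) silently absorbs the cross term $\langle y_1^{k},u^{k}-e^{v^{k}}\rangle$, whereas your $H^{-1}$ square-completion makes the required estimate explicit and identifies the extra condition $\rho\ge\lambda\|\nabla\Delta^{-1}\|^2$ needed to dominate $-\frac{\lambda}{2}\|u^{k}-e^{v^{k}}\|_{H^{-1}}^2$ by $\frac{\rho}{2}\|u^{k}-e^{v^{k}}\|_2^2$, which is a genuine gap in the paper's own argument.
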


\begin{proof}
	The augmented Lagrangian function $L$ can be rewritten as
	\[\begin{array}{*{20}{l}}
	{L\left( {{i^{k + 1}},{r^{k + 1}},{v^{k + 1}},{u^{k + 1}},{{\bf{m}}^{k + 1}},{{\bf{n}}^{k + 1}};y_1^{k + 1},{\bf{y}}_2^{k + 1},{\bf{y}}_3^{k + 1}} \right)}\\
	{\;\;\; = \frac{\lambda }{2}\left\| {\nabla \left( {{\Delta ^{ - 1}}\left( {f - {u^{k + 1}}} \right)} \right)} \right\|_2^2 + \frac{\theta }{2}\left\| {{i^{k + 1}}} \right\|_2^2 + \frac{\beta }{2}\left\| {{v^{k + 1}} - {i^{k + 1}} - {r^{k + 1}}} \right\|_2^2}\\
	{\begin{array}{*{20}{l}}
		{\;\;\; + {\omega _1}\int_\Omega  {\varphi \left( {{{\bf{m}}^{k + 1}}} \right)} dx + {\omega _2}\int_\Omega  {\varphi \left( {{{\bf{n}}^{k + 1}}} \right)} dx}\\
		{\;\;\; + \left\langle {y_1^{k + 1},{u^{k + 1}} - {e^{{v^{k + 1}}}}} \right\rangle  + \frac{\rho }{2}\left\| {{u^{k + 1}} - {e^{{v^{k + 1}}}}} \right\|_2^2}
		\end{array}}\\
	{\;\;\; + \left\langle {{\bf{y}}_2^{k + 1},{{\bf{m}}^{k + 1}} - {\nabla ^2}{i^{k + 1}}} \right\rangle  + \frac{\rho }{2}\left\| {{{\bf{m}}^{k + 1}} - {\nabla ^2}{i^{k + 1}}} \right\|_2^2}\\
	{\;\;\; + \left\langle {{\bf{y}}_3^{k + 1},{{\bf{n}}^{k + 1}} - \nabla {r^{k + 1}}} \right\rangle  + \frac{\rho }{2}\left\| {{{\bf{n}}^{k + 1}} - \nabla {r^{k + 1}}} \right\|_2^2.}
	\end{array}\]
	Furthermore, combining (3.34) and (3.36), we derive the inequality system as follows
	\[\begin{array}{l}
	L\left( {{i^{k + 1}},{r^{k + 1}},{v^{k + 1}},{u^{k + 1}},{{\bf{m}}^{k + 1}},{{\bf{n}}^{k + 1}};y_1^{k + 1},{\bf{y}}_2^{k + 1},{\bf{y}}_3^{k + 1}} \right)\\
	\;\;\; = \frac{\lambda }{2}\left\| {\nabla \left( {{\Delta ^{ - 1}}\left( {f - {u^{k + 1}}} \right)} \right)} \right\|_2^2 + \frac{\theta }{2}\left\| {{i^{k + 1}}} \right\|_2^2 + \frac{\beta }{2}\left\| {{v^{k + 1}} - {i^{k + 1}} - {r^{k + 1}}} \right\|_2^2\\
	\;\;\; + {\omega _1}\int_\Omega  {\varphi \left( {{{\bf{m}}^{k + 1}}} \right)} dx + {\omega _2}\int_\Omega  {\varphi \left( {{{\bf{n}}^{k + 1}}} \right)} dx\\
	\;\;\; + \left\langle {\lambda {\Delta ^{ - 1}}\left( {{u^{k + 1}} - f} \right),{u^{k + 1}} - {e^{{v^{k + 1}}}}} \right\rangle  + \frac{\rho }{2}\left\| {{u^{k + 1}} - {e^{{v^{k + 1}}}}} \right\|_2^2\\
	\;\;\; + \left\langle { - {\omega _1}\nabla \varphi \left( {{{\bf{m}}^{k + 1}}} \right),{{\bf{m}}^{k + 1}} - {\nabla ^2}{i^{k + 1}}} \right\rangle  + \frac{\rho }{2}\left\| {{{\bf{m}}^{k + 1}} - {\nabla ^2}{i^{k + 1}}} \right\|_2^2\\
	\;\;\; + \left\langle { - {\omega _2}\nabla \varphi \left( {{{\bf{n}}^{k + 1}}} \right),{{\bf{n}}^{k + 1}} - \nabla {r^{k + 1}}} \right\rangle  + \frac{\rho }{2}\left\| {{{\bf{n}}^{k + 1}} - \nabla {r^{k + 1}}} \right\|_2^2.
	\end{array}\tag{3.39}\]
	Since $\varphi \left(  \cdot  \right)$ is a nonconvex function, we have
	\[ \left\{ \begin{array}{l}
	\varphi \left( {{{\bf{m}}^{k + 1}}} \right) + \left\langle {\nabla \varphi \left( {{{\bf{m}}^{k + 1}}} \right),{\nabla ^2}{i^{k + 1}} - {{\bf{m}}^{k + 1}}} \right\rangle  \ge \varphi \left( {{\nabla ^2}{i^{k + 1}}} \right)\\
	\varphi \left( {{{\bf{n}}^{k + 1}}} \right) + \left\langle {\nabla \varphi \left( {{{\bf{n}}^{k + 1}}} \right),\nabla {r^{k + 1}} - {{\bf{n}}^{k + 1}}} \right\rangle  \ge \varphi \left( {\nabla {r^{k + 1}}} \right)
	\end{array} \right..\]
	By above discussion, we obtain
	\[\begin{array}{*{20}{l}}
	{L\left( {{i^{k + 1}},{r^{k + 1}},{v^{k + 1}},{u^{k + 1}},{{\bf{m}}^{k + 1}},{{\bf{n}}^{k + 1}};y_1^{k + 1},{\bf{y}}_2^{k + 1},{\bf{y}}_3^{k + 1}} \right)}\\
	{\;\;\; \ge \frac{\lambda }{2}\left\| {f - {e^{{i^{k + 1}} + {r^{k + 1}}}}} \right\|_{{H^{ - 1}}}^2 + {\omega _1}\int_\Omega  {\varphi \left( {{\nabla ^{\rm{2}}}{i^{k + 1}}} \right)} dx + {\omega _2}\int_\Omega  {\varphi \left( {\nabla {r^{k + 1}}} \right)} dx}\\
	{\;\;\; + \frac{\theta }{2}\left\| {{i^{k + 1}}} \right\|_2^2 + \frac{\rho }{2}\left\| {{u^{k + 1}} - {e^{{v^{k + 1}}}}} \right\|_2^2 + \frac{\rho }{2}\left\| {{{\bf{m}}^{k + 1}} - {\nabla ^2}{i^{k + 1}}} \right\|_2^2}\\
	{\;\;\; + \frac{\rho }{2}\left\| {{{\bf{n}}^{k + 1}} - \nabla {r^{k + 1}}} \right\|_2^2 \ge E\left( {{i^{k + 1}},{r^{k + 1}}} \right) \ge \underline E .}
	\end{array}\tag{3.40}\]
	Finally, the augmented Lagrangian function $L\left( {{i^k},{r^k},{v^k},{u^k},{{\bf{m}}^k},{{\bf{n}}^k};y_1^k,{\bf{y}}_2^k,{\bf{y}}_3^k} \right)$ is monotone decreasing according to Theorem 3.8, so the function $L$ is convergent as $k \to  + \infty $.
	
	The desired result is obtained.\qed
\end{proof}

\begin{theorem}
	Let the sequence $ \left( {{i^k},{r^k},{v^k},{u^k},{{\bf{m}}^k},{{\bf{n}}^k};y_1^k,{\bf{y}}_2^k,{\bf{y}}_3^k} \right)$ generated by Algorithm 1. If the parameter $\rho$ is large enough such that $\rho  > \max \left\{ {{{2{\lambda ^2}\left\| {{\Delta ^{ - 1}}} \right\|_2^2} \mathord{\left/
			{\vphantom {{2{\lambda ^2}\left\| {{\Delta ^{ - 1}}} \right\|_2^2} {{\eta _1}}}} \right.
			\kern-\nulldelimiterspace} {{\eta _1}}},\;{{2{K^2}{\omega _1}} \mathord{\left/
			{\vphantom {{2{K^2}{\omega _1}} {{\eta _2}}}} \right.
			\kern-\nulldelimiterspace} {{\eta _2}}},\;{{2{K^2}{\omega _2}} \mathord{\left/
			{\vphantom {{2{K^2}{\omega _2}} {{\eta _3}}}} \right.
			\kern-\nulldelimiterspace} {{\eta _3}}}} \right\}$, and we can infer that
	$$\mathop {\lim }\limits_{k \to \infty } \left\| {{i^{k + 1}} - {i^k}} \right\|_2^2 = 0,\;\;\mathop {\lim }\limits_{k \to \infty } \left\| {{r^{k + 1}} - {r^k}} \right\|_2^2 = 0,\;\;\mathop {\lim }\limits_{k \to \infty } \left\| {{v^{k + 1}} - {v^k}} \right\|_2^2 = 0,$$
	$$\mathop {\lim }\limits_{k \to \infty } \left\| {{u^{k + 1}} - {u^k}} \right\|_2^2 = 0,\;\;\mathop {\lim }\limits_{k \to \infty } \left\| {{{\bf{m}}^{k + 1}} - {{\bf{m}}^k}} \right\|_2^2 = 0,\;\;\mathop {\lim }\limits_{k \to \infty } \left\| {{{\bf{n}}^{k + 1}} - {{\bf{n}}^k}} \right\|_2^2 = 0,$$
	$$\mathop {\lim }\limits_{k \to \infty } \left\| {{u^{k + 1}} - {e^{{v^{k + 1}}}}} \right\|_2^2 = 0,\;\;\mathop {\lim }\limits_{k \to \infty } \left\| {{{\bf{m}}^{k + 1}} - {\nabla ^2}{i^{k + 1}}} \right\|_2^2 = 0,\;\;\mathop {\lim }\limits_{k \to \infty } \left\| {{{\bf{n}}^{k + 1}} - \nabla {r^{k + 1}}} \right\|_2^2 = 0.$$
\end{theorem}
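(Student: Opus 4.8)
The plan is to convert the one-step descent estimate obtained in the proof of Theorem 3.8 into a summability statement by telescoping, use the lower bound from Theorem 3.9 to bound the partial sums, and then read off the three constraint residuals from the multiplier updates.

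First I would return to inequality (3.38). Under the standing hypothesis $\rho > \max\{2\lambda^2\|\Delta^{-1}\|_2^2/\eta_1,\ 2K^2\omega_1/\eta_2,\ 2K^2\omega_2/\eta_3\}$, each of the coefficients $\eta_1/2 - \lambda^2\|\Delta^{-1}\|_2^2/\rho$, $\eta_2\omega_1/2 - K^2\omega_1^2/\rho$ and $\eta_3\omega_2/2 - K^2\omega_2^2/\rho$ is strictly positive, while $\gamma_1,\gamma_2,\gamma_3>0$ by Theorem 3.5. Let $\delta>0$ be the minimum of these six positive numbers and write $L^k$ for the value of the augmented Lagrangian at the $k$-th iterate. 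Then (3.38) yields
\[ L^k - L^{k+1} \ge \delta\big(\|i^{k+1}-i^k\|_2^2 + \|r^{k+1}-r^k\|_2^2 + \|v^{k+1}-v^k\|_2^2 + \|u^{k+1}-u^k\|_2^2 + \|\mathbf{m}^{k+1}-\mathbf{m}^k\|_2^2 + \|\mathbf{n}^{k+1}-\mathbf{n}^k\|_2^2\big). \]
Summing over $k=0,\dots,N$, the left side telescopes to $L^0-L^{N+1}$, and since $L^{N+1}\ge\underline E$ by Theorem 3.9 (cf. (3.40)), the partial sums of the nonnegative series $\sum_{k\ge0}\delta\big(\|i^{k+1}-i^k\|_2^2+\cdots\big)$ are bounded above by $L^0-\underline E<\infty$. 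Hence that series converges, so its general term tends to $0$; as each of the six norms appearing in the statement is one of the six summands, this gives the first six limits.

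Next I would obtain the three residual limits from the multiplier update (3.25) together with the identities proved inside Theorem 3.7. From (3.25), $u^{k+1}-e^{v^{k+1}} = \tfrac1\rho(y_1^{k+1}-y_1^k)$, and by (3.35) $\|y_1^{k+1}-y_1^k\|_2^2 \le \lambda^2\|\Delta^{-1}\|_2^2\,\|u^{k+1}-u^k\|_2^2$; combining these with $\|u^{k+1}-u^k\|_2^2\to0$ gives $\|u^{k+1}-e^{v^{k+1}}\|_2^2 \le \tfrac{\lambda^2\|\Delta^{-1}\|_2^2}{\rho^2}\|u^{k+1}-u^k\|_2^2\to0$. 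Identically, $\mathbf{m}^{k+1}-\nabla^2 i^{k+1}=\tfrac1\rho(\mathbf{y}_2^{k+1}-\mathbf{y}_2^k)$ and $\mathbf{n}^{k+1}-\nabla r^{k+1}=\tfrac1\rho(\mathbf{y}_3^{k+1}-\mathbf{y}_3^k)$, and the bounds (3.37), which rest on the Lipschitz constant $K$ of Assumption 3.2, give $\|\mathbf{m}^{k+1}-\nabla^2 i^{k+1}\|_2^2 \le \tfrac{K^2\omega_1^2}{\rho^2}\|\mathbf{m}^{k+1}-\mathbf{m}^k\|_2^2\to0$ and $\|\mathbf{n}^{k+1}-\nabla r^{k+1}\|_2^2\le\tfrac{K^2\omega_2^2}{\rho^2}\|\mathbf{n}^{k+1}-\mathbf{n}^k\|_2^2\to0$. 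This establishes all nine limits.

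The argument is essentially mechanical once Theorems 3.5–3.9 are in hand; the only points requiring care are verifying that the prescribed lower bound on $\rho$ simultaneously makes all six descent coefficients positive, so that a single $\delta>0$ can be extracted, and noting that $L^0$ is finite because the iterates and $\varphi$ are finite-valued. I therefore do not anticipate a genuine obstacle here — the substantive work, namely the sufficient-decrease inequality (3.38), has already been carried out in Theorems 3.5–3.8.
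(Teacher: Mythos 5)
Your proposal is correct and follows essentially the same route as the paper: the first six limits come from the sufficient-decrease inequality (3.38) combined with the boundedness/convergence of $L$ from Theorem 3.9 (you make the telescoping-summability step explicit where the paper simply "takes the limit" of (3.38), but the content is identical), and the three residual limits come from the multiplier updates (3.25) together with the bounds (3.35) and (3.37), exactly as in the paper.
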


\begin{proof}
	Firstly, combining Theorem 3.9 and taking the limit of Eq(3.38), we obtain
	$$\mathop {\lim }\limits_{k \to \infty } \left\| {{i^{k + 1}} - {i^k}} \right\|_2^2 = 0,\;\;\mathop {\lim }\limits_{k \to \infty } \left\| {{r^{k + 1}} - {r^k}} \right\|_2^2 = 0,\;\;\mathop {\lim }\limits_{k \to \infty } \left\| {{v^{k + 1}} - {v^k}} \right\|_2^2 = 0,$$
	$$\mathop {\lim }\limits_{k \to \infty } \left\| {{u^{k + 1}} - {u^k}} \right\|_2^2 = 0,\;\;\mathop {\lim }\limits_{k \to \infty } \left\| {{{\bf{m}}^{k + 1}} - {{\bf{m}}^k}} \right\|_2^2 = 0,\;\;\mathop {\lim }\limits_{k \to \infty } \left\| {{{\bf{n}}^{k + 1}} - {{\bf{n}}^k}} \right\|_2^2 = 0.$$
	Next, combining (3.35) and (3.37), we have
	\[ \left\{ \begin{array}{l}
	\mathop {\lim }\limits_{k \to \infty } \left\| {y_1^{k + 1} - y_1^k} \right\|_2^2 \le {\lambda ^2}\left\| {{\Delta ^{ - 1}}} \right\|_2^2\mathop {\lim }\limits_{k \to \infty } \left\| {{u^{k + 1}} - {u^k}} \right\|_2^2 = 0\\
	\mathop {\lim }\limits_{k \to \infty } \left\| {{\bf{y}}_2^{k + 1} - {\bf{y}}_2^k} \right\|_2^2 \le \omega _1^2{K^2}\mathop {\lim }\limits_{k \to \infty } \left\| {{{\bf{m}}^{k + 1}} - {{\bf{m}}^k}} \right\|_2^2 = 0\\
	\mathop {\lim }\limits_{k \to \infty } \left\| {{\bf{y}}_3^{k + 1} - {\bf{y}}_3^k} \right\|_2^2 \le \omega _2^2{K^2}\mathop {\lim }\limits_{k \to \infty } \left\| {{{\bf{n}}^{k + 1}} - {{\bf{n}}^k}} \right\|_2^2 = 0
	\end{array} \right..\tag{3.41}\]
	Finally, combining the updated format of Lagrange multiplier ${y_1},\;{{\bf{y}}_2}$ and ${{\bf{y}}_{\bf{3}}}$ in Eqs(3.25), we can easily deduce
	\[\left\{ \begin{array}{l}
	\mathop {\lim }\limits_{k \to \infty } \left\| {{u^{k + 1}} - {e^{{v^{k + 1}}}}} \right\|_2^2 = \frac{1}{{{\rho ^2}}}\mathop {\lim }\limits_{k \to \infty } \left\| {y_1^{k + 1} - y_1^k} \right\|_2^2 = 0\\
	\mathop {\lim }\limits_{k \to \infty } \left\| {{{\bf{m}}^{k + 1}} - {\nabla ^2}{i^{k + 1}}} \right\|_2^2 = \frac{1}{{{\rho ^2}}}\mathop {\lim }\limits_{k \to \infty } \left\| {{\bf{y}}_2^{k + 1} - {\bf{y}}_2^k} \right\|_2^2 = 0\\
	\mathop {\lim }\limits_{k \to \infty } \left\| {{{\bf{n}}^{k + 1}} - \nabla {r^{k + 1}}} \right\|_2^2 = \frac{1}{{{\rho ^2}}}\mathop {\lim }\limits_{k \to \infty } \left\| {{\bf{y}}_3^{k + 1} - {\bf{y}}_3^k} \right\|_2^2 = 0
	\end{array} \right..\]
	
	The desired result is obtained.\qed
\end{proof}

\begin{theorem}
	Let $i \in {X_1},\;r \in {X_2},\;v \in {X_3},\;u = {e^v} \in {G_1},\;{\bf{m}} = {\nabla ^2}i \in {G_2},\;{\bf{n}} = \nabla r \in {G_3}$. If ${X_1},\;{X_2},\;{X_3},\;{G_1},\;{G_2},\;{G_3}$ are the compact sets, the sequence ${z^k} = \left( {{i^k},{r^k},{v^k},{u^k},{{\bf{m}}^k},{{\bf{n}}^k};y_1^k,{\bf{y}}_2^k,{\bf{y}}_3^k} \right)$ generated by Algorithm 1 converges to a limit point ${z^*} = \left( {{i^*},{r^*},{v^*},{u^*},{{\bf{m}}^*},{{\bf{n}}^*};y_1^*,{\bf{y}}_2^*,{\bf{y}}_3^*} \right)$, which is a stationary point of the augmented Lagrangian function $ L\left( {{i},{r},{v},{u},{{\bf{m}}},{{\bf{n}}};y_1,{\bf{y}}_2,{\bf{y}}_3} \right)$.
\end{theorem}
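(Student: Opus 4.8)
The plan is to turn the vanishing-increment estimates of Theorem 3.10 into a convergent subsequence by means of the compactness hypotheses, and then to pass to the limit in the first-order optimality conditions of the six subproblems so as to identify every limit point as a stationary point of $L$. First I would check that $\{z^k\}$ is bounded. The primal blocks $i^k,r^k,v^k,u^k,\mathbf{m}^k,\mathbf{n}^k$ lie in the compact sets $X_1,X_2,X_3,G_1,G_2,G_3$ by hypothesis, hence are bounded; the multipliers are bounded too, since (3.34) gives $y_1^{k+1}=\lambda\Delta^{-1}(u^{k+1}-f)$ with $\Delta^{-1}$ a bounded operator and $\{u^{k+1}\}$ bounded, while (3.36) gives $\mathbf{y}_2^{k+1}=-\omega_1\nabla\varphi(\mathbf{m}^{k+1})$ and $\mathbf{y}_3^{k+1}=-\omega_2\nabla\varphi(\mathbf{n}^{k+1})$, with $\nabla\varphi$ Lipschitz (Assumption 3.2), hence continuous, on the compact sets $G_2,G_3$. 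By Bolzano--Weierstrass there is a subsequence $z^{k_j}\to z^*$, and because $\|z^{k+1}-z^k\|_2\to0$ by Theorem 3.10 the neighbouring iterates $z^{k_j-1}$ and $z^{k_j+1}$ converge to the same $z^*$; moreover the residual limits in Theorem 3.10 give $u^*=e^{v^*}$, $\mathbf{m}^*=\nabla^2 i^*$ and $\mathbf{n}^*=\nabla r^*$, which are exactly the stationarity relations in the multiplier blocks.

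Next I would pass to the limit in the optimality identities produced by Algorithm 1: equation (3.12) (in the $H^{-1}$-simplified form of Theorem 3.1) for $u$, equation (3.16) for $i$, equation (3.19) for $r$, the Euler--Lagrange equation of the linearised problem (3.9) for $v$, and the MM-stationarity relations $\omega_1\nabla\varphi(\mathbf{m}^{k+1})+\rho(\mathbf{m}^{k+1}-\nabla^2 i^{k+1}+\mathbf{y}_2^k/\rho)=0$ together with its $\mathbf{n}$-analogue. Evaluating each identity along $k=k_j$ and using the continuity of $\nabla$, $\nabla^2$, $\Delta^{-1}$, $t\mapsto e^t$ and $\nabla\varphi$, using $\|z^{k+1}-z^k\|_2\to0$ to collapse the mixed indices (e.g. $r^k\to r^*$ and $\mathbf{y}_2^k\to\mathbf{y}_2^*$ alongside $i^{k+1}\to i^*$) and to annihilate the proximal term $\tau(v^{k+1}-v^k)$, and using the multiplier-gap limits $y_1^{k+1}-y_1^k\to0$, $\mathbf{y}_2^{k+1}-\mathbf{y}_2^k\to0$, $\mathbf{y}_3^{k+1}-\mathbf{y}_3^k\to0$ from (3.41), I would obtain $\nabla_i L(z^*)=0$, $\nabla_r L(z^*)=0$, $\nabla_v L(z^*)=0$, $\nabla_u L(z^*)=0$ and $0\in\partial_{\mathbf{m}}L(z^*)$, $0\in\partial_{\mathbf{n}}L(z^*)$, where the surrogate gradient (2.4) is read as the chosen element of the subdifferential of the nonconvex $\varphi$. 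Combined with the constraint relations of the previous paragraph this is precisely $0\in\partial L(z^*)$, i.e. $z^*$ is a stationary point of the augmented Lagrangian $L$; the same reasoning applies to any limit point of $\{z^k\}$.

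Finally, for convergence of the whole sequence rather than merely of a subsequence, I would invoke that the set $\Omega^*$ of limit points of the bounded sequence $\{z^k\}$ is nonempty and compact, is connected because $\|z^{k+1}-z^k\|_2\to0$, and on which $L$ is constant (by Theorem 3.9 the values $L(z^k)$ converge, and $L$ is continuous); hence when $\Omega^*$ reduces to a single point --- equivalently when the relevant stationary point is isolated on the compact feasible sets --- the sequence converges to that point $z^*$.

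I expect the main obstacle to be exactly this last step: upgrading subsequential convergence to full-sequence convergence, since $\|z^{k+1}-z^k\|_2\to0$ together with boundedness does not by itself preclude a continuum of limit points, so a fully rigorous argument would require isolatedness of the stationary points of $L$ or, more standardly, a Kurdyka--{\L}ojasiewicz inequality for $L$. A secondary care point is the bookkeeping of mismatched iteration indices in the $i$-, $r$- and $v$-optimality conditions (which couple $z^k$, $z^{k+1}$ and the previous multipliers): Theorem 3.10 supplies what is needed, but it must be invoked term by term so that every quantity is seen to converge to the corresponding component of $z^*$.
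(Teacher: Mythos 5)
Your proposal follows essentially the same route as the paper's own proof: use the compactness hypotheses together with Theorem 3.10 to extract a convergent subsequence $z^{k_i}\to z^*$, then pass to the limit in the six first-order optimality conditions of the subproblems (using the vanishing increments and multiplier gaps to collapse the mismatched indices) to conclude that $z^*$ satisfies the stationarity relations of $L$. The one issue you flag --- that $\|z^{k+1}-z^k\|\to 0$ plus boundedness yields only subsequential convergence, and that full-sequence convergence would need isolatedness of stationary points or a Kurdyka--{\L}ojasiewicz argument --- is a genuine gap, but it is present in the paper's proof as well, which likewise only establishes convergence along a subsequence despite the theorem's stronger wording.
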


\begin{proof}
	Since ${X_1},\;{X_2},\;{X_3},\;{G_1},\;{G_2},\;{G_3}$ are compact sets, according to Theorem 3.10, then we can infer that there must be a subsequence $\left\{ {{z^{{k_i}}}} \right\}$ of $\left\{ {{z^k}} \right\}$ such that  ${z^{{k_i}}} \to {z^*}$ as ${k_i} \to  + \infty $.
	
	Next, we prove that the point ${z^*}$ is a stationary point of the augmented Lagrangian function $L\left( {{i^k},{r^k},{v^k},{u^k},{{\bf{m}}^k},{{\bf{n}}^k};y_1^k,{\bf{y}}_2^k,{\bf{y}}_3^k} \right)$. The sequence ${z^*}$ must satisfy the sufficient condition of the minimization problem, so we obtain
	$$\beta \left( {{v^k} - {i^{k + 1}} - {r^k}} \right) + \rho {\left( {{\nabla ^2}} \right)^*}\left( {{{\bf{m}}^k} - {\nabla ^2}{i^{k + 1}} + \frac{{{\bf{y}}_2^k}}{\rho }} \right) + \theta {i^{k + 1}} = 0,$$
	$$\beta \left( {{v^k} - {i^{k + 1}} - {r^{k + 1}}} \right) + \rho {\nabla ^*}\left( {{{\bf{n}}^k} - \nabla {r^{k + 1}} + \frac{{{\bf{y}}_3^k}}{\rho }} \right) = 0,$$
	$$\beta \left( {{v^{k + 1}} - {i^{k + 1}} - {r^{k + 1}}} \right) - \rho {e^{{v^{k + 1}}}}\left( {{u^k} - {e^{{v^{k + 1}}}} + \frac{{y_1^k}}{\rho }} \right) = 0,$$
	$$ - \lambda {\Delta ^{ - 1}}\left( {{u^{k + 1}} - f} \right) + \rho \left( {{u^{k + 1}} - {e^{{v^{k + 1}}}} + \frac{{y_1^k}}{\rho }} \right) = 0,$$
	$${\omega _1}\nabla \varphi \left( {{{\bf{m}}^{k + 1}}} \right) + \rho \left( {{{\bf{m}}^{k + 1}} - {\nabla ^2}{i^{k + 1}} + \frac{{{\bf{y}}_2^k}}{\rho }} \right) = 0,$$
	$${\omega _2}\nabla \varphi \left( {{{\bf{n}}^{k + 1}}} \right) + \rho \left( {{{\bf{n}}^{k + 1}} - \nabla {r^{k + 1}} + \frac{{{\bf{y}}_3^k}}{\rho }} \right) = 0.$$
	
	Finally, combining Theorem 3.10, passing the limit in the above six equations along the subsequence $\left\{ {{z^{{k_i}}}} \right\}$, we have
	$${\left( {{\nabla ^2}} \right)^*}{\bf{y}}_2^* =  - \beta \left( {{v^*} - {i^*} - {r^*}} \right) - \theta {i^*},\;{\nabla ^*}{\bf{y}}_3^* =  - \beta \left( {{v^*} - {i^*} - {r^*}} \right),\;{e^{{v^*}}}y_1^* = \beta \left( {{v^*} - {i^*} - {r^*}} \right),$$
	$$y_1^* = \lambda {\Delta ^{ - 1}}\left( {{u^*} - f} \right),\;{\bf{y}}_2^* =  - {\omega _1}\nabla \varphi \left( {{{\bf{m}}^{*}}} \right),\;{\bf{y}}_3^* =  - {\omega _2}\nabla \varphi \left( {{{\bf{n}}^{*}}} \right),$$
	$${u^*} = {e^{{v^*}}},\;\;{{\bf{m}}^*} = {\nabla ^2}{i^*},\;{{\bf{n}}^*} = \nabla {r^*}.$$
	which implies that the point ${z^*}$ is a stationary point of the function $L\left( {i,r,v,u,{\bf{m}},{\bf{n}};{y_1},{{\bf{y}}_2},{{\bf{y}}_3}} \right)$.
	
		The desired result is obtained.\qed
\end{proof}

\section{Numerical experiment}

In this section, we conduct a series of numerical experiments to further illustrate the superiority and effectiveness of the proposed model from four different perspectives. Firstly, we verify the robustness of the selection of the initial value ${u^0}$ in Algorithm 1 through experimental results. Secondly, we perform numerical experiments to test two commonly used generalized nonconvex functions mentioned in Subsection 2.2. Throughout these experiments, unless stated otherwise, the generalized nonconvex function in Algorithm 1 is set as $\varphi \left( { t } \right) = {\left| t \right|^p}\left( {0< p < 1} \right)$. We then proceed to experiment and demonstrate the ternary decomposition results of two test images under both noise-free and noisy experimental environments. Finally, we compare the proposed model with several advanced denoising models on specific test image sets to clearly illustrate its effectiveness in denoising applications

In the numerical experiments, the iteration stop condition of the proposed algorithm is that the maximum number of iterations ${N_{\max }} = 1000$ or the relative error of $u$ is less than $tol = {10^{ - 5}}$. Here the relative error is defined as
\[R\left( {{u^k}} \right) = \frac{{{{\left\| {{u^k} - {u^{k - 1}}} \right\|}_2}}}{{{{\left\| {{u^k}} \right\|}_2}}}.\]

In addition, the PSNR \cite{b36} and MSSIM \cite{b37} are selected as the quantitative evaluation indices for assessing the quality of the restored images. It is well known that higher values of PSNR and MSSIM indicate better image quality. It should be noted that Fig.2 displays some of the test images used in the numerical experiments, with a size of $256 \times 256$. All numerical experiments in this paper were implemented in MATLAB and executed on a PC equipped with Windows 10, an Intel Core i5-8300H CPU running at 2.3GHz, and 16GB of RAM.

\begin{center}
	\begin{minipage}
		{1\textwidth}\centering {\includegraphics[width=13cm,height=5.5cm]{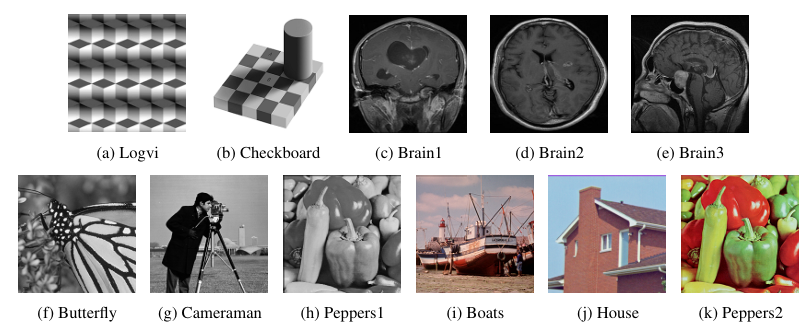}}
	\end{minipage}
	\\[1mm]
	\vskip2mm
	{\small \textbf{Fig.2} Some test images for numerical experiments.}
\end{center}

\subsection{Sensitivity analysis of initial value}

This example aims to demonstrate the robustness of the selection of  initial values of ${u^0}$ in the algorithm based on the experimental results. The proposed algorithm in this paper requires initializing certain parameters and variables, with particular interest in three variables: ${r^0}$, ${i^0}$ and ${u^0}$. However, considering the numerous combinations for selecting initial values for these three variables, it significantly increases the experimental cost.  In fact, we find that the three variables satisfy the relation $u = {e^{r + i}}$ by the numerical algorithm in this paper, so we only choose different initial values ${u^0}$ to verify their robustness to the experimental results. Base on this, we set four initial value images ${u^0} = zeros$ (all gray values are $0$), $ones$ (all gray values are $1$), $random$ (gray values are some random numbers between $0$ to $1$) and $f$ (observed image) as experimental objects.

Figure 3 illustrates the experimental results of the ``Cameraman'' image obtained by the proposed model using different initial values of ${u^0}$. The first row shows the difference image between the noisy input image and the restored image, which mainly consists of noise components and oscillation characteristics. The second and third rows display the restored image and a locally zoomed-in region of that image, respectively.
Upon examining these results, it is evident that the visual effects of the restored images are almost identical regardless of the different initial values ${u^0}$.

In addition, we also select two other test images, namely  ``Butterfly'' and ``Peppers1'', to serve as subjects for supplementary experiments, and test them at different noise levels. Table 1 shows the PSNR values of the restored image obtained by the proposed model with different initial values ${u^0}$. It should be noted that the last column in Table 1 represents the standard deviation (SD) of the four PSNR values in each row, which measures the dispersion of PSNR values. From the results, it is evident that these standard deviation (SD) values are consistently around 0.02, which indicates that the different initial values of ${u^0}$ have a negligible  quantitative impact on the experimental results.

\begin{center}  
	\begin{minipage}{\textwidth}  
		\centering  
		\includegraphics[width=\textwidth]{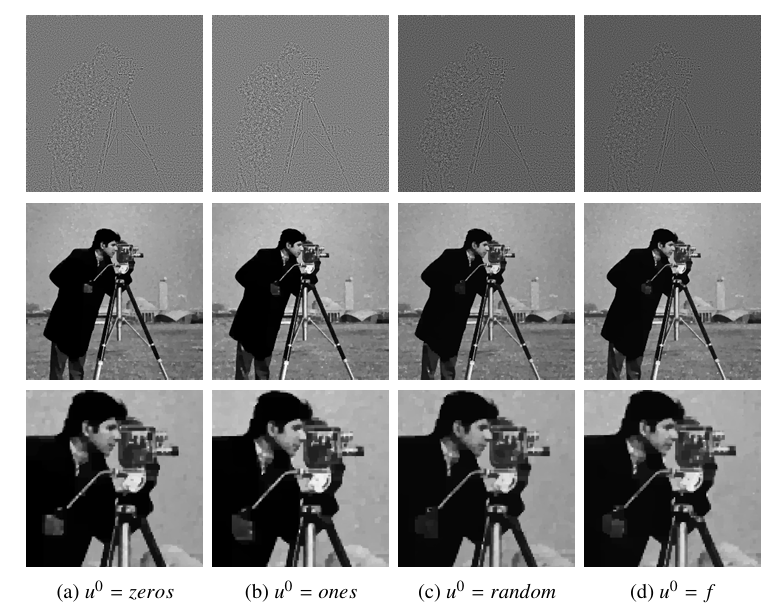}
		\vskip2mm  
		{\small \textbf{Fig.3} The experimental results of ``Cameraman'' image under different initial values ${u^0}$ with $\sigma = 15$.}  
	\end{minipage}  
\end{center}

\begin{table}[htbp]
	\centering
	\footnotesize
	{\small \textbf{Table 1.} The PSNR values of denoising for different initial values ${u^0}$.}
	\begin{tabular}{ccccccc}
		\toprule
		Image & Noise level &${u^0} = zeros$ &${u^0} = ones$ &${u^0} = random$ & ${u^0} = f$ & SD \\
		\midrule
		Butterfly & $\sigma  = {\rm{10}}$    & 32.1851 & 32.2216 & 32.1904 & 32.1744 & \textbf{0.0203} \\
		Peppers1 & $\sigma  = {\rm{10}}$    & 31.4034 & 31.3994 & 31.3599 & 31.3578 & \textbf{0.0246} \\
		Cameraman & $\sigma  = {\rm{10}}$    & 31.5367 &31.4948 & 31.4953 & 31.4858 & \textbf{0.0228} \\
		Cameraman & $\sigma  = {\rm{15}}$    & 29.3441 &29.3091 & 29.3620 & 29.3419& \textbf{0.0220} \\
		Cameraman & $\sigma  = {\rm{20}}$    & 27.7944 & 27.7576 & 27.7893 & 27.8157& \textbf{0.0240} \\
		\toprule
	\end{tabular}%
	\label{tab:addlabel}%
\end{table}%

\subsection{Test of different potential functions}

This example aims to examine the impact of different non-convex potential functions on the image restoration outcomes of the proposed model. To accomplish this, we have chosen two specific non-convex potential functions, namely ${\varphi _1}\left( \cdot \right)$ and ${\varphi _2}\left( \cdot \right)$, for conducting comparative experiments on the test image ``Peppers1'' contaminated with Gaussian noise having a variance of $\sigma=15$. The functional  graphs corresponding to the exponential function ${\varphi _1}\left( \cdot \right)\left( {p = 0.2,\;0.5,\;0.7,\;0.9} \right)$ and the logarithmic function ${\varphi _2}\left( \cdot \right)\left( {\alpha = 1,\;2,\;4,\;6} \right)$ are depicted in Fig.4.

It is important to note that the purpose of this experiment is primarily to assess the impact of different non-convex potential functions on the denoising results of the model rather than achieving optimal image restoration effects. Therefore, apart from varying the non-convex parameters $p$ and $\alpha$ in the potential functions, all other parameters remain unchanged across all experiments. Consequently, it is evident that the image restoration outcomes may not be optimal in this particular scenario.

It is widely recognized that non-convex functions can effectively preserve structural information such as image edges and contours, exhibiting strong denoising capabilities. However, they may yield staircase effects in smooth areas. From the function graph of the non-convex potential function shown in Fig.4, it is evident that the non-convex function ${\varphi _1}\left( \cdot \right)$ becomes weaker as the parameter $p$ increases. In other words, a smaller value of $p$ (e.g., $p = 0.2$) can better retain structural information in the image, while increasing the value of $p$ weakens the denoising effect but restores smoother areas.
On the contrary, the change trend of the parameter $\alpha$ in the function ${\varphi _2}\left( \cdot \right)$ is opposite to that of $p$, i.e., larger values of $\alpha$ result in stronger non-convexity. Fig.5 illustrates the schematic diagram of the proposed model after denoising the ``Peppers1'' image using different values of $p$ and $\alpha$ for the non-convex potential functions ${\varphi _1}\left( \cdot \right)$ and ${\varphi _2}\left( \cdot \right)$ respectively. By comparing Fig.5(i) and (l), it is visually evident that compared to when $p = 0.9$, a smaller value of $p=0.2$ leads to more pronounced staircase effects, which further confirms our previous observations based on visual effects. Similarly, a similar conclusion can be drawn by observing Fig.5(q) and (t).
Fig.6 presents local cross-sections of the denoised image, where we also observe that stronger non-convexity results in more severe staircase effects.

Furthermore, Table 2 presents the  PSNR  and  MSSIM  values for all denoised images in Fig.5, with the optimal value highlighted in bold. It is observed that the image quality is highest when $p = 0.2$ in the non-convex potential function ${\varphi _1}\left( \cdot \right)$. This suggests that the reflection component $R$ and illumination component $I$ of the ``Peppers1'' image have a sparse representation in the  TV  transform domain under this condition. In fact, when $\alpha = 2$ in ${\varphi _2}\left( \cdot \right)$, the non-convexity becomes stronger compared with the case of $\alpha = 1$, resulting in higher PSNR and MSSIM values for the restored image.
However, since the ``Peppers1'' image is not strictly sparse in the TV transform domain, as $\alpha$ continues to increase, the quality of the restored image decreases.

\begin{center}
	\begin{minipage}
		{1\textwidth}\centering {\includegraphics[width=12cm,height=4.5cm]{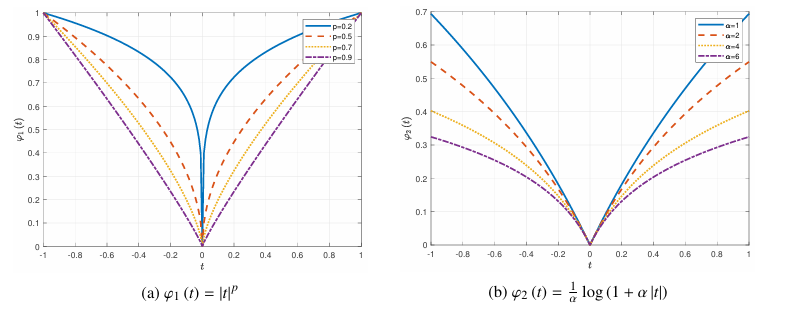}}
	\end{minipage}
	\vskip2mm
{\small \textbf{Fig.4}  Plots of different non-convex potential functions.} 
\end{center}

\begin{center}
	\begin{minipage}
		{1\textwidth}\centering {\includegraphics[width=13cm,height=13.5cm]{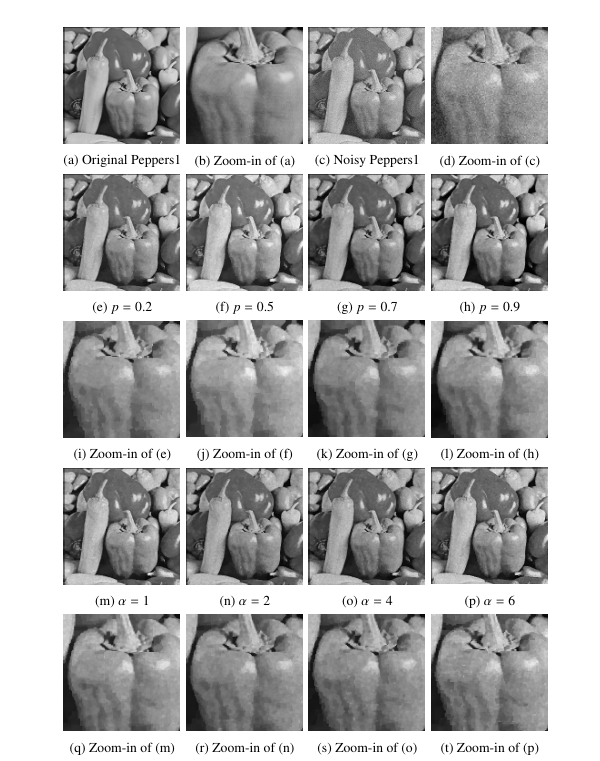}}
	\end{minipage}
	\vskip2mm
		{\small \textbf{Fig.5} Restoration results of ``Peppers1'' image using different potential functions with $\sigma=15$.}
\end{center}

\begin{table}[htbp]
	\centering
	\footnotesize
	{\small \textbf{Table 2.} The PSNR and MSSIM values of restored images in Fig.5.}
	\begin{tabular}{cccccc}
		\toprule
		& \multicolumn{2}{c}{${\varphi _1}\left( { t } \right)$} &       & \multicolumn{2}{c}{${\varphi _2}\left( { t } \right)$} \\
		\cmidrule{2-3}\cmidrule{5-6}          & PSNR  & MSSIM &       & PSNR  & MSSIM \\
		\midrule
		$p=0.2$ & \textbf{29.0718} & \textbf{0.9647} & $\alpha=1$ & 29.5156 & 0.9648 \\
		$p=0.5$ & 28.8074 & 0.9618 & $\alpha=2$ & \textbf{29.6635} & \textbf{0.9654} \\
		$p=0.7$ & 28.8694 & 0.9643 & $\alpha=4$ & 29.6343 & 0.9632 \\
		$p=0.9$ & 28.8407 & 0.9616 & $\alpha=6$ & 29.5064 & 0.9605 \\
		\bottomrule
	\end{tabular}%
	\label{tab:addlabel}%
\end{table}%

\begin{center}  
	\begin{minipage}{\textwidth}  
		\centering  
		\includegraphics[width=\textwidth]{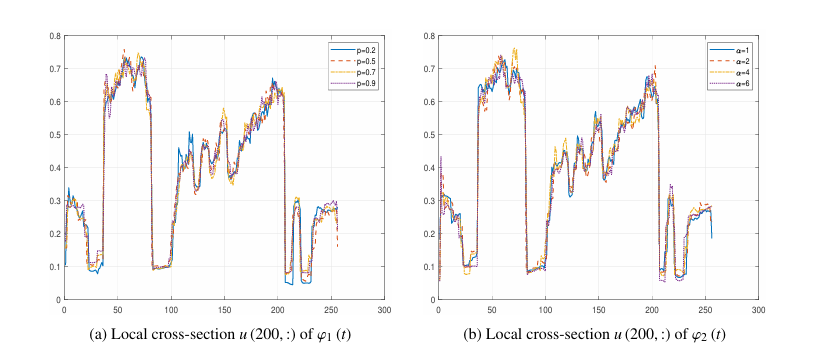}
		\vskip2mm  
		{\small \textbf{Fig.6}  The local cross-section of restored images in Fig.5.}
	\end{minipage}  
\end{center}

\subsection{Three component decomposition  by the proposed  model}

In this section, we select two typical test images ``Logvi'', and ``Checkboard'' to verify the effectiveness of the proposed model at different noise levels. In fact, the proposed model (3.2) in this paper is based on the retinex theory background, which decomposes a noisy image $f$ into three different components $f = I \odot R + n$ according to the retinex imaging theory (3.1), that is, the reflection component $R = {e^r}$ with piecewise constant characteristics, the illumination component $I = {e^i}$ with spatial smoothing characteristics, and the noise $n$ with high frequency oscillation characteristics. It is worth noting that the main purpose of this experiment is to verify that the proposed model can accurately decompose the above three components and can use $R$ and $I$ to reconstruct the restored image $u = I \odot R$.

Fig.7 shows the decomposed and restored image results for ``Logvi'' at $\sigma = 10$ and $\sigma = 20$,  with the last row representing the local cross sections $R\left( {200,:} \right)$, $I\left( {200,:} \right)$ and $u\left( {200,:} \right)$ from left to right. We find that the noisy image ``Logvi'' is well decomposed into piecewise constant component $R$, spatially smooth component $I$ and oscillatory component $n$ from Fig.7. In addition, the restored image $u$ reconstructed according to the reflectivity $R$ and the illumination $I$ also retains the edge and texture information, which indicates that the proposed decomposition model can be effectively applied to image denoising. For example, Fig.7(b)-(d) show a schematic diagram of the three components $R$, $I$ and $n$ when $\sigma = 10$, respectively, and (e) is the reconstructed image $u$ from $R$ and $I$. In order to illustrate the validity of the model more clearly, we may as well observe the Fig.7(g)-(j), which is the locally zoom-in region of above four subimages. Obviously, we can clearly find from the image that the reflection component $R$ has an obvious piecewise constant region, the illumination component $I$ has a spatial smoothing feature, and the noise component $n$ is chaotic. It can be seen from the Fig.7(j) that the restored image remains the edge and texture information better, and compared with the reflected image $R$, it has a slighter step effect and is more suitable the characteristics of a natural image.

We also test the decomposition effect of ``Checkboard'' image at $\sigma = 15$, and the decomposition results and cross-section diagram are shown in Fig.8. Obviously, we find from the Fig.8 that the proposed model divides the noise image into three components $R$, $I$ and $n$ with different characteristics, and successfully reconstructs the restored image $u$. The third row in Fig.8 shows the local cross-section diagrams of $R$, $I$ and $u$, respectively. By observing the gray value curve Fig.8(f)-(h), we notice that the reflection image $R$ contains a wide range of piecewise constant regions, and the illumination $I$ has the characteristics of piecewise linear smoothing. This is because the proposed model uses a generalized nonconvex first-order TV regularizer to measure the reflection component $R$, a generalized nonconvex second-order TV regularizer to measure the illumination component $I$, and the weaker-norm ${H^{ - 1}}$ to extract the oscillation component $n$.

From the above two decomposition experiments, we conclude that the proposed model can accurately extract the characteristics of different components in the degraded image, i.e., the reflectance $R$, illumination $I$ and noise $n$ can be well separated.

\begin{center}
	\begin{minipage}
		{1\textwidth}\centering {\includegraphics[width=12cm,height=12cm]{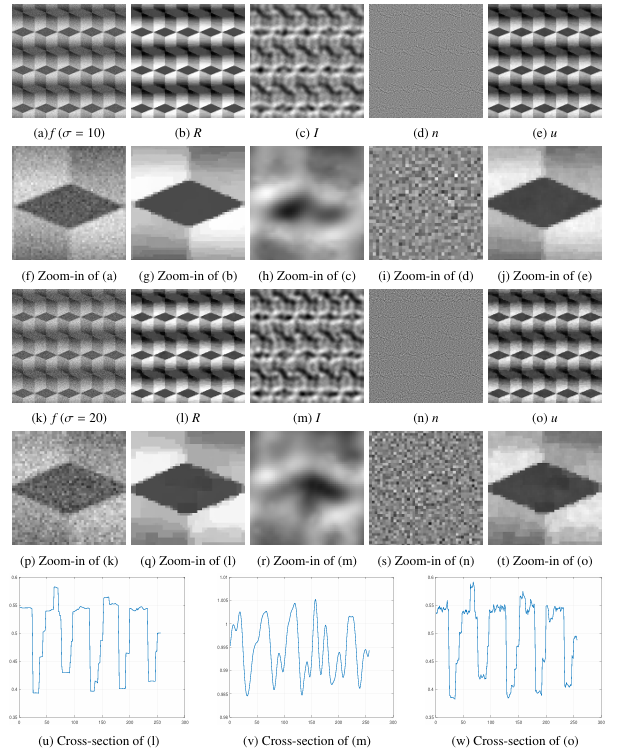}}
	\end{minipage}
	\vskip2mm
	{\small \textbf{Fig.7} Decomposition results of ``Logvi'' image at different noise levels.}
\end{center}

\begin{center}  
	\begin{minipage}{\textwidth}  
		\centering  
		\includegraphics[width=\textwidth]{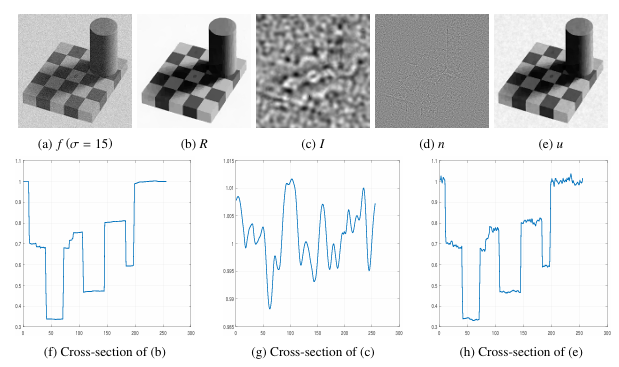}
		\vskip2mm  
		{\small \textbf{Fig.8} Decomposition results of ``Checkboard'' image with $\sigma=15$.}
	\end{minipage}  
\end{center}

\subsection{The proposed decomposition model for image denoising application}

The proposed variational decomposition model in this paper is based weak space and hybrid nonconvex regularization, so the expression of the restored image is described as $u = I \odot R = {e^{i + r}}$. In order to evaluate the performance of the proposed model in the field of image denoising, we compare the proposed model with some classical or popular variational models on different types of test images in this section. And the main purpose of the experiment is to verify the effectiveness of the denoising performance of the proposed model on various types of images. Specifically, we test the restoration effect of the proposed model on four image sets (gray images, real brain MR images, color images and three data sets), and compare some other variational models, including NTV \cite{b31}, TGV \cite{b38}, NNTV-$H^{-1}$ \cite{b34} and NETV \cite{b21} models.

As we know, NTV model is the most classical nonconvex variational model with excellent structural recovery efficiency. The TGV model is a high-order variational hybrid regularization model, which has excellent denoising effect but insufficient retention of structural information such as edges and contours. And we set the parameters ${\alpha _0} = 2$ and ${\alpha _1} = 1$ according to the suggestion of \cite{b38} in the experiment. NNTV-$H^{-1}$ model is a variational decomposition model proposed by Tang et al., which uses weak space and nonconvex regularizers to decompose images into oscillatory and structural components. Since the model directly regards the structural component as the restored image, the restored image lacks some texture oscillation information. NETV model is based on retinex theory and can better restore the image structure by using non-convex regularization term to model the source reflectance and illumination respectively. Finally, it should be noted that the above four variational models used for comparison are solved under the framework of ADMM algorithm, and the non-convex potential functions in the models are selected as $\varphi \left( { t } \right) = {\left| t \right|^p}$.

\noindent\textbf{Example 1.} Tests on gray images

In this example, we test the denoising performance of the proposed model on three gray images ``Cameraman'', ``Butterfly'', and ``Peppers1''. In order to make the proposed model more convincing in denoising effectiveness, we let these gray images used in the experiment be contaminated by Gaussian noise with different noise levels ($\sigma=10$ and 15). Fig.9 and Fig.10 show the denoising results of gray images contaminated by Gaussian noise with standard deviation $\sigma = 10$ and $\sigma = 15$, respectively. In which, the data in the second and fourth rows is the zoom-in region of the images, and the last row represents the colorbars of the difference image between the restored image and the real clean image. In addition, all the denoising data of the above gray test images are presented in Table 3, and the optimal values are bolded.

From Fig.9, we find that the NTV model and NNTV-$H^{-1}$ model are similar in the visual perception of the restored image, that is, they can better restore the sharp edge and contour information in the images. However, the smooth region of the image causes serious staircase effect, which destroys texture and detail information of the images. It is worth noting that the NNTV-$H^{-1}$ model suffers from a slighter staircase phenomenon than the NTV model, because the NNTV-$H^{-1}$ model uses weak space to separate the oscillating components in the images. TGV model has strong denoising efficiency and can restore smooth areas in images, but it also over-smooths the structural information such as edges and contours in images. For example, from the local zoom-in region of the ``Peppers1'' image in Fig.9, it is easy to find that the NTV model exists more serious piecewise constant region, while the TGV model blurs the edges of the image. Compared with the above models, NETV model has better edge and texture recovery effect, but its denoising performance is insufficient. Finally, we draw the  point from the difference image shown in figures that the proposed model has the singlest colorbars, that is, the proposed model has better results in both denoising performance and texture and edge retention.

In addition, the PSNR and MSSIM values shown in Table 4 can also intuitively find that the proposed model has the best PSNR value compared with the above variational models. It is undeniable that the MSSIM values of restored ``Cameraman'' image by the TGV model is slightly higher than those of the proposed model. Therefore, we further illustrate the superiority of the proposed model from a quantitative perspective based on the numerical results obtained from the experiment.

\begin{center}  
	\begin{minipage}{\textwidth}  
		\centering  
		\includegraphics[width=\textwidth]{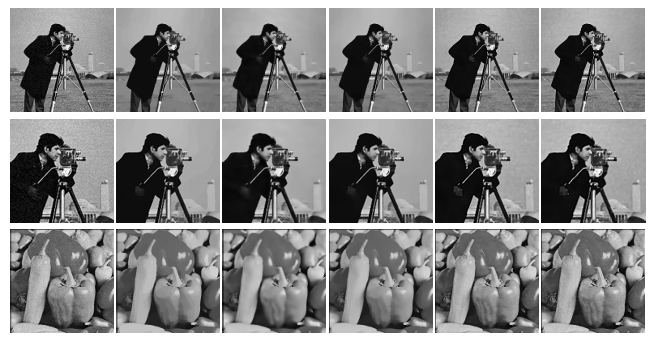}
		\vskip2mm  
		{\small \textbf{Fig.9} The denoising results of different models for the gray test images($\sigma  = {\rm{10}}$). }
	\end{minipage}  
\end{center}

\begin{center}
\begin{minipage}
	{1\textwidth}\centering {\includegraphics[width=13.4cm,height=11cm]{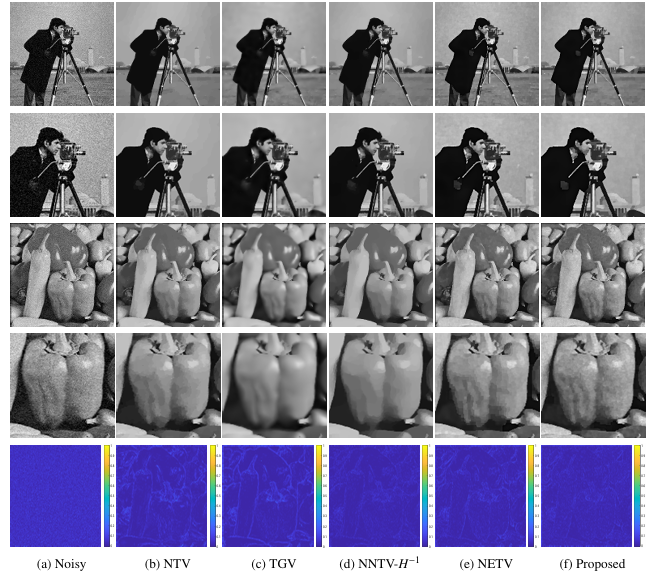}}
\end{minipage}
\vskip2mm
{\small \textbf{Fig.10} The denoising results of different models for the gray test images($\sigma  = {\rm{15}}$). }
\end{center}

\begin{table}[htbp]
	\centering
	\footnotesize
	{\small \textbf{Table 3.} The PSNR and MSSIM values for denoised gray images by different models.}
	\begin{tabular}{cccccccc}
		\toprule
		 $\sigma$   &  Images     & Noised & NTV   & TGV   & NNTV-$H^{-1}$ & NETV  & Proposed \\
		\midrule
		10    & Cameraman & 28.09/0.81 & 29.30/0.92 & 30.09/\textbf{0.94} & 29.97/0.92 & 29.89/0.93 & \textbf{31.70}/0.93 \\
		& Peppers1 & 28.13/0.93 & 29.83/0.96 & 30.34/0.97 & 30.81/0.97 & 31.29/0.97 & \textbf{31.81}/\textbf{0.98} \\
		& Butterfly & 28.15/0.93 & 30.58/0.96 & 31.47/0.98 & 31.63/0.97 & 31.82/0.98 & \textbf{32.54}/\textbf{0.98} \\
		\\
		15    & Cameraman & 24.57/0.72 & 28.45/0.91 & 28.72/\textbf{0.92} & 29.18/0.91 & 28.53/0.91 & \textbf{29.36}/0.91 \\
		& Peppers1 & 24.59/0.87 & 27.89/0.94 & 28.37/0.95 & 28.42/0.95 & 29.19/0.95 & \textbf{29.49}/\textbf{0.96} \\
		& Butterfly & 24.60/0.90 & 28.76/0.95 & 29.43/0.96 & 29.11/0.96 & 29.63/0.96 & \textbf{29.87}/\textbf{0.97} \\
		\bottomrule
	\end{tabular}%
	\label{tab:addlabel}%
\end{table}%

\noindent\textbf{Example 2.} Tests on real brain MR images

As we know, medical images are produced by complex imaging systems. During the imaging process, the images will be seriously degraded, which is caused by the interference of unstable signals such as electromagnetic or current. Therefore, in this example, we select three real cleaner brain MR images as test objects and add high intensity Gaussian noise with standard deviation $\sigma = 20$ to them in order to evaluate the denoising effect of the proposed model on complex medical images. Fig.11 and Fig.12 show the denoising results of ``Brain1'', ``Brain2'' and ``Brain3'' images respectively and contain the zoom-in region of the test images and the colorbars of the difference images. The PSNR and MSSIM values are listed in Table 4. In addition, we also show the gray value function curve of the local cross-section $u\left( {200,150:250} \right)$ of the restored images in Fig.13.

From the denoising results, we find that the NTV model exists serious piecewise constant regions, but this influence is smaller for the NNTV-$H^{-1}$ model. TGV model removes most of the noise in images. Such as the zoom-in region of ``Brain1'' in Fig.11, it is obvious that the noise in smooth regions almost non-existent, but the sharp contour may be mistakenly regarded as noise being smoothed. NETV model can balance the recovery of edge and smooth region, but it is difficult to separate high-frequency oscillation information, so its denoising effect is less effective. It is clear that the proposed model overcomes the above shortcomings, and can preserve the edge and texture information while denoising. In addition, we draw a similar conclusion by observing the PSNR and MSSIM values in Table 4, i.e., the proposed model has the largest PSNR and MSSIM values on real brain MR images.

Finally, we further verify the above viewpoint from the gray value curve of the local cross-section $u\left( {200,150:250} \right)$ of the restored image. From Fig.13(a), we find that the TGV model smoothes the region where the gray value changes rapidly, and NNTV-$H^{-1}$ model exists staircase effect in low-frequency oscillation regions. Similarly, from Fig.13(b), we also observe that the NTV model presents the more serious piecewise constant phenomenon, and the NETV model retains some high-frequency oscillation components. It can be concluded that the proposed model is superior and generalizable and shows excellent results in recovering complex medical images.

\begin{table}[htbp]
	\centering
	\footnotesize
	{\small \textbf{Table 4.} The PSNR and MSSIM values for denoised real brain MR images by different models.}
	\begin{tabular}{ccccccccc}
		\toprule
		& \multicolumn{2}{c}{Brain1} &       & \multicolumn{2}{c}{Brain2} &       & \multicolumn{2}{c}{Brain3} \\
		\cmidrule{2-3}\cmidrule{5-6}\cmidrule{8-9}          & PSNR  & MSSIM &       & PSNR  & MSSIM &       & PSNR  & MSSIM \\
		\midrule
		Noisy & 22.11 & 0.68  &       & 22.14 & 0.65  &       & 22.10  & 0.72 \\
		NTV   & 28.20  & 0.88  &       & 27.89 & 0.84  &       & 27.20  & 0.84 \\
		TGV   & 28.67 & 0.90   &       & 28.95 & 0.89  &       & 27.65 & 0.88 \\
		NNTVH-1 & 29.08 & 0.91  &       & 28.76 & 0.88  &       & 27.59 & 0.86 \\
		NETV  & 29.28 & 0.90   &       & 29.02 & 0.88  &       & 28.07 & 0.90 \\
		Proposed & \textbf{29.67} & \textbf{0.92}  &       & \textbf{29.38} & \textbf{0.90}   &       & \textbf{28.24 }& \textbf{0.90} \\
		\bottomrule
	\end{tabular}%
	\label{tab:addlabel}%
\end{table}%

\begin{center}  
	\begin{minipage}{\textwidth}  
		\centering  
		\includegraphics[width=\textwidth]{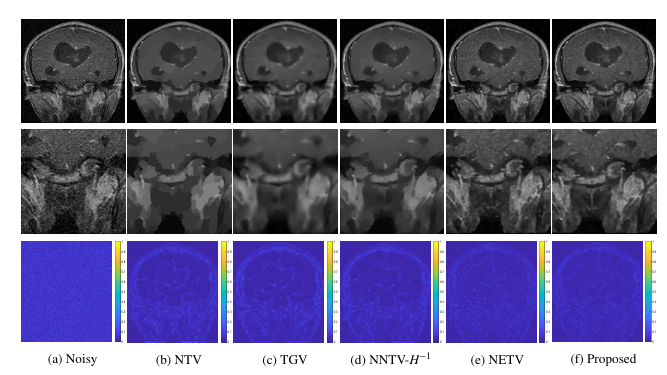}
		\vskip2mm  
		{\small \textbf{Fig.11} The denoising results of noisy ``Brain1'' image with $\sigma=20$  }
	\end{minipage}  
\end{center}

	\begin{center}  
		\begin{minipage}{\textwidth}  
			\centering  
			\includegraphics[width=\textwidth]{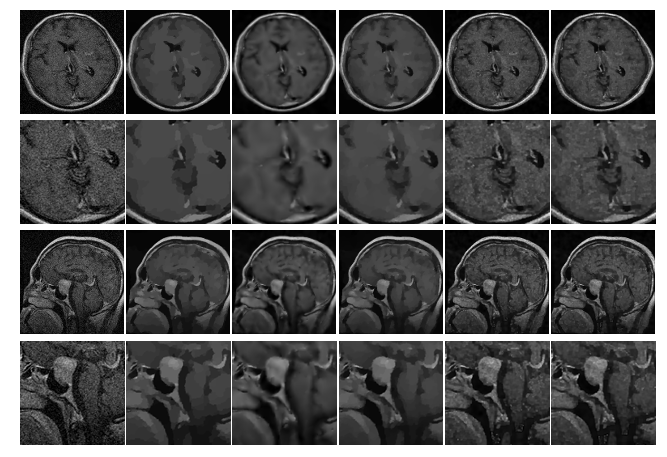}
			\vskip2mm  
			{\small \textbf{Fig.12} The denoising results of noisy ``Brain2'' and ``Brain3'' images with $\sigma  = 20$. }
		\end{minipage}  	
\end{center}

	\begin{center}  
	\begin{minipage}{\textwidth}  
		\centering  
		\includegraphics[width=\textwidth]{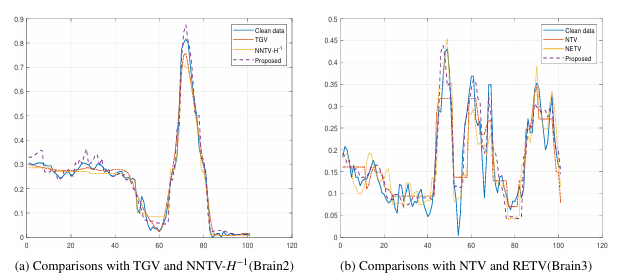}
		\vskip2mm  
		{\small \textbf{Fig.13}  The local cross-section $u\left( {200,150:250} \right)$ of restored images in Fig.12.}
	\end{minipage}  	
\end{center}

\noindent\textbf{Example 3.} Tests on color images

In this example, we test the denoising performance of the proposed model on some color images. In fact, a color image contains three RGB color channels, then it can also be considered as a three-dimensional array of $M \times N \times 3$. And the noisy color images are generated by the RGB channels that are all contaminated by Gaussian noise. In this experiment, it should be noted that all the denoising models denoise the three color channels of the image, and the PSNR and MSSIM values are the average values of the three restored RGB channels. The PSNR and MSSIM values of the denoised image are listed in Table 5, and Fig.14 shows the recovery results of the color images at $\sigma = 15$, where the third and fifth rows represent the local zoom-in region of the images.

Similarly, from Fig.14, we clearly observe that the NTV model and NNTV-$H^{-1}$ model recover the structural information of the image well. For example, the overall structure of the house in ``House'' image is well preserved, but the smooth information of the roof and wall has a serious staircase effect. It is worth noting that the TGV model can restore these smooth regions mentioned above, but its strong denoising performance will cause the loss of edge information. NETV model can not only retain the contour of the image, but also restore part of the texture and detail information, but its denoising performance is insufficient to completely eliminate the noise in images. The proposed model combines the advantages of the above models, i.e., this model effectively reduces the staircase phenomenon, and can remove the noise while restoring the image edge and texture information.

In addition, the experimental data in Table 5 also reflect the advantages of the proposed model in color image restoration. We find that the MSSIM values of NNTV-$H^{-1}$ model are only slightly higher than those of the proposed model for ``Boats'' image at $\sigma=15$. In all the remaining cases, the PSNR and MSSIM values of the proposed model are ahead of other variational models. In summary, we can conclude that the proposed model still has superior recovery results on color images.

\begin{table}[htbp]
	\centering
	\footnotesize
	{\small \textbf{Table 5.} The PSNR and MSSIM values for denoised color images by different models.}
	\begin{tabular}{cccccccc}
		\toprule
		$\sigma$ & Images      & Noised & NTV   & TGV   & NNTV-$H^{-1}$ & NETV  & Proposed \\
		\midrule
		15   & Boats  & 24.62/0.86 & 26.56/0.91 & 27.17/0.93 & 27.49/\textbf{0.96} & 27.94/0.93 & \textbf{28.26}/0.94 \\
		& House & 24.60/0.67 & 29.60/0.85 & 30.74/0.89 & 30.32/0.87 & 30.24/0.85 & \textbf{30.86}/\textbf{0.90} \\
		& Peppers2 & 24.61/0.82 & 27.37/0.89 & 29.01/0.94 & 29.01/0.93 & 29.24/0.91 & \textbf{29.45}/\textbf{0.94} \\
		\\
		20    & Boats  & 22.12/0.79 & 25.04/0.88 & 25.78/0.90 & 26.23/0.93 & 26.54/0.91 & \textbf{26.73}/\textbf{0.92} \\
		& House & 22.12/0.58 & 28.42/0.82 & 29.75/0.85 & 29.52/0.84 & 29.42/0.83 & \textbf{29.92}/\textbf{0.86} \\	
		& Pepers2 & 22.12/0.74 & 26.22/0.86 & 27.64/0.88 & 27.71/0.91 & 27.87/0.90 & \textbf{28.09}/\textbf{0.91} \\
		\bottomrule
	\end{tabular}%
	\label{tab:addlabel}%
\end{table}%

		\begin{center}  
		\begin{minipage}{\textwidth}  
			\centering  
			\includegraphics[width=\textwidth]{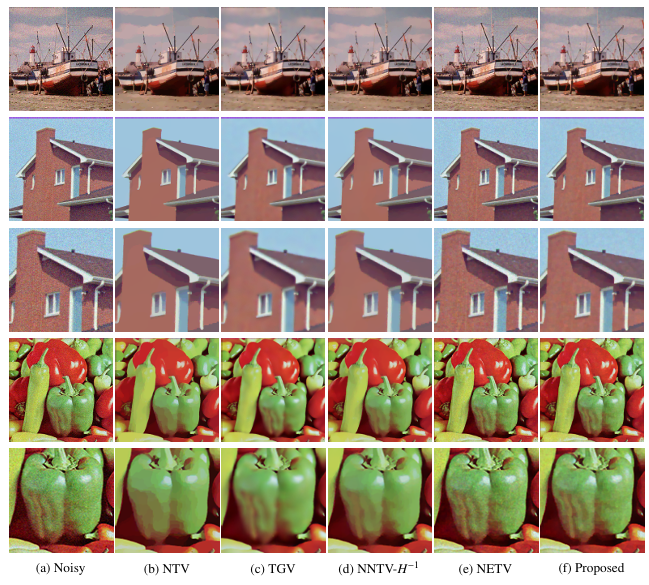}
			\vskip2mm  
			{\small \textbf{Fig.14} The denoising results of the color test images with $\sigma  = 15$. }
		\end{minipage}  	
	\end{center}

\noindent\textbf{Example 4.} Tests on three DataSets

In the last example, we further validate the effectiveness and superiority of the proposed model by conducting experiments on three image datasets: Set14 \cite{b39}, IVC \cite{b40}, and Tid2008 \cite{b41}. The proposed model is compared with several classical or state-of-the-art  variational models in PSNR values, including TV \cite{b42}, NTV \cite{b31}, TGV \cite{b38}, HOTV \cite{b43}, NLTV \cite{b44}, OGS-TV \cite{b45}, NNTV-$H^{-1}$ \cite{b34} and NETV \cite{b21}. It should be noted that we do not impose any mandatory resizing of test images in these datasets.

Table 6 lists the average PSNR values of the restored images obtained by the denoising models mentioned above on three datasets with different noise levels. Based on the numerical results, it is evident that the proposed model achieves the highest average PSNR values, indicating superior recovery performance. Thus far, in this subsection, we have effectively demonstrated the generalization and superiority of our proposed model in image restoration through four illustrative examples.

\begin{table}[htbp]
	\centering
	\footnotesize
	{\small \textbf{Table 6.} The average PSNR of the restored images form Set14, IVC and Tid2008.}
	\begin{tabular}{cccccccccccc}
		\toprule
		& \multicolumn{3}{c}{$\sigma=10$} &       & \multicolumn{3}{c}{$\sigma=15$} &       & \multicolumn{3}{c}{$\sigma=20$} \\
		\cmidrule{2-4}\cmidrule{6-8}\cmidrule{10-12}          & Set14 & IVC   & Tid2008 &       & Set14 & IVC   & Tid2008 &       & Set14 & IVC   & Tid2008 \\
		\midrule
		Noised & 28.13 & 28.13 & 28.12 &       & 25.23 & 25.24 & 25.23 &       & 22.57 & 22.56 & 22.56 \\
		TV    & 32.73 & 32.73 & 32.72 &       & 29.18 & 29.14 & 29.13 &       & 27.21 & 27.18 & 27.15 \\
		NTV   & 32.44 & 32.42 & 32.40 &       & 29.02 & 28.95 & 28.92 &       & 27.14 & 27.02 & 27.00 \\
		TGV   & 32.94 & 32.89 & 32.84 &       & 29.75 & 29.54 & 29.46 &       & 27.73 & 27.46 & 27.32 \\
		HOTV  & 32.59 & 32.57 & 32.56 &       & 29.12 & 29.07 & 28.95 &       & 27.18 & 27.16 & 27.12 \\
		NLTV  & 32.97 & 32.92 & 32.88 &       & 29.82 & 29.64 & 29.50 &       & 27.88 & 27.63 & 27.52 \\
		OGS-TV & 33.05 & 32.92 & 32.90 &       & 29.85 & 29.68 & 29.55 &       & 28.04 & 27.89 & 27.90 \\
		NNTV-H-1 & 32.89 & 32.87 & 32.83 &       & 29.37 & 29.16 & 29.04 &       & 27.43 & 27.25 & 27.14 \\
		NETV  & 33.22 & 33.05 & 32.95 &       & 30.54 & 30.25 & 30.11 &       & 28.29 & 28.03 & 27.95 \\
		Proposed & \textbf{33.62} & \textbf{33.48} & \textbf{33.32} &       & \textbf{31.12} & \textbf{30.95} & \textbf{30.78} &       & \textbf{28.75} & \textbf{28.61} & \textbf{28.58} \\
		\bottomrule
	\end{tabular}%
	\label{tab:addlabel}%
\end{table}%

\section{Conclusions}

In this paper, we propose an exponential Retinex decomposition model for image denoising based on weak space and hybrid nonconvex regularization. This model decomposes the noisy image into three incoherent parts, and reconstructs the denoised image by using the reflection component and the illumination component. Specifically, the oscillation component, reflection component and illumination component are measured by weak $H^{-1}$ space, nonconvex first order TV regularizer and nonconvex second order TV regularizer, respectively. In addition, we also propose an ADMM combined with MM algorithm to solve the proposed model, and provide sufficient conditions for the convergence of the proposed algorithm. Numerical experiments were conducted to compare the proposed model with several state-of-the-art denoising models. The experimental results demonstrate that the proposed model outperforms these models in terms of PSNR and MSSIM values. This further confirms its effectiveness and superiority.

It is important to acknowledge that there are still some limitations in the proposed model, which also serve as potential directions for our future research: (1) There is a need to explore more efficient optimization algorithms that can guarantee the uniqueness and global convergence of the numerical solution. This will enhance the reliability and stability of the model; (2) The proposed model takes too much time by using the parameter values determined by trial and error method. In the future work, we will focus on the adaptive methods of these parameters.

\section*{Acknowledgement}

This work was supported in part by the Natural Science Foundation of China under Grant No. 62061016, 61561019, the Doctoral Scientific Fund Project of Hubei Minzu University under Grant No. MY2015B001, the Innovative Project of the School of Mathematics and Statistics, Hubei Minzu University under Grant No. STK2023002.



%
\section*{Declarations}

\noindent Conflict of Interest: The authors declare that they have no conflict of interest.

\vspace{1ex}\noindent Data Availability Statement: Data sharing not applicable to this article as no datasets were generated or analyzed during the current study.
%

\bibliographystyle{elsarticle-num}
\bibliography{mybibfile}

%
%

\end{document}